\newtheorem{assumption}{Assumption}
\newenvironment{thm}[1]
{\innercustomthm}
{\endinnercustomthm}
\def\eqref#1{equation~\ref{#1}}
\def\1{\bm{1}}
\newcommand\infonce[1]{\ensuremath{\mathcal{L}_{\text{InfoNCE}}^{#1}}}
\newcommand\ltd{\ensuremath{\mathcal{L}_{\text{InfoNCE+LTD}}}}
\newcommand\ifm{\ensuremath{\mathcal{L}_{\text{InfoNCE+IFM}}}}
\DeclareMathAlphabet{\mathsfit}{\encodingdefault}{\sfdefault}{m}{sl}
\SetMathAlphabet{\mathsfit}{bold}{\encodingdefault}{\sfdefault}{bx}{n}
\def\sR{{\mathbb{R}}}
  \providecommand\BibTeX{{%
    \normalfont B\kern-0.5em{\scshape i\kern-0.25em b}\kern-0.8em\TeX}}}
\acrodef{ICR}{image-caption retrieval}
\acrodef{SSL}{self-supervised learning}
\acrodef{CL}{contrastive learning}
\acrodef{MS-COCO}{MS-COCO Captions}
\acrodef{Flickr30k}{Flickr30k}
\acrodef{i2t}{image-to-text}
\acrodef{t2i}{text-to-image}
\acrodef{VL}{vision-language}
\acrodef{VLM}{vision-language model}
\acrodef{SOTA}{state-of-the-art}
\acrodef{MI}{mutual information}
\acrodef{LTD}{latent target decoding}
\acrodef{IFM}{implicit feature modification}
\acrodef{SBERT}{Sentence-BERT}
\acrodef{SVL}{synthetic shortcuts for vision-language}
\acrodef{i2t}{image-to-text}
\acrodef{t2i}{text-to-image}
\definecolor{cblue}{RGB}{0, 83, 155}
\definecolor{cred}{RGB}{238, 49, 42}
\definecolor{cyellow}{RGB} {255, 223, 0}
\definecolor{corange}{RGB} {245, 132, 43}
\definecolor{cpurple}{RGB}{123, 67, 154}
\definecolor{clightblue}{RGB}{0, 174, 239}
\definecolor{cburgundy}{RGB}{198, 12, 70}
\definecolor{cgreen}{RGB}{0, 158, 71}
\definecolor{cpurpler}{RGB}{158, 32, 110}
\definecolor{update_green}{RGB}{0, 153, 0}
\definecolor{baseline}{rgb}{0.18, 0.55, 0.34}
\definecolor{cap_only}{rgb}{0.8, 0.52, 0.25}
\definecolor{img_only}{rgb}{0.5, 0.0, 0.0}
\definecolor{us_with}{rgb}{0.539361783929258, 0.31820069204152246, 0.6458285274894272}
\definecolor{us_without}{rgb}{0.6054901960784314, 0.7074509803921569, 0.8392156862745098}
\definecolor{bs_with}{rgb}{0.0196078431372549, 0.18823529411764706, 0.3803921568627451}
\definecolor{bs_without}{rgb}{0.974163783160323, 0.7508650519031141, 0.6424452133794694}
\definecolor{improve}{rgb}{152.9, 255.0, 77.5}
\newcommand{\xmark}{\ding{55}}%
\newcommand{\headernodot}[1]{\vspace{1mm}\noindent\textbf{#1}}
\newcommand{\header}[1]{\headernodot{#1.}}
\newcommand{\img}[1]{\mathbf{x}_{\mathcal{I}}^{#1}}
\newcommand{\capt}[2]{\mathbf{x}_{\mathcal{C}_{#2}}^{#1}}
\newcommand{\zimg}[2]{\mathbf{z}_{\mathcal{I}_{#2}}^{#1}}
\newcommand{\zcapt}[2]{\mathbf{z}_{\mathcal{C}_{#2}}^{#1}}
\newcommand{\zimgsuff}[1]{\mathbf{z}_{ \mathcal{I} \rightarrow \mathcal{C}_{#1}}^{\textit{SUF}}}
\newcommand{\zcaptsuff}[2]{\mathbf{z}_{\mathcal{C}_{#2} \rightarrow \mathcal{I}}^{#1{\textit{SUF}}}}
\newcommand{\zminsuff}[1]{\mathbf{z}_{ \mathcal{I} \rightarrow \mathcal{C}_{#1}}^{\textit{MIN}}}
\newcommand{\zoptimal}[2]{\mathbf{z}_{\mathcal{I} \rightarrow #2}^{\textit{OPT}}{#1}}
\newcommand{\sct}{S_{SynSC}}
\newtheorem{definition}{Definition}[section]
\newcommand{\positive}{\textcolor{ForestGreen}{\boldsymbol{\uparrow}}}
\newcommand{\negative}{\textcolor{Red}{\boldsymbol{\downarrow}}}
\newcommand*{\rom}[1]{\expandafter\@slowromancap\romannumeral #1@}
 \newcommand{\baseline}{
	\tikz[line width = 0pt, line join = round,]{
		\draw[fill=baseline,baseline] (0,0) rectangle (5.0mm,1.7mm);
	}
}
 \newcommand{\caponly}{
	\tikz[line width = 0pt, line join = round,]{
		\draw[fill=cap_only,cap_only] (0,0) rectangle (5.0mm,1.7mm);
	}
}
 \newcommand{\imgonly}{
	\tikz[line width = 0pt, line join = round,]{
		\draw[fill=img_only,img_only] (0,0) rectangle (5.0mm,1.7mm);
	}
}
 \newcommand{\uswith}{
	\tikz[line width = 0pt, line join = round,]{
		\draw[fill=us_with,us_with] (0,0) rectangle (5.0mm,1.7mm);
	}
}
 \newcommand{\uswithout}{
	\tikz[line width = 0pt, line join = round,]{
		\draw[fill=us_without,us_without] (0,0) rectangle (5.0mm,1.7mm);
	}
}
 \newcommand{\bswith}{
	\tikz[line width = 0pt, line join = round,]{
		\draw[fill=bs_with,bs_with] (0,0) rectangle (5.0mm,1.7mm);
	}
}
 \newcommand{\bswithout}{
	\tikz[line width = 0pt, line join = round,]{
		\draw[fill=bs_without,bs_without] (0,0) rectangle (5.0mm,1.7mm);
	}
}
\title{Demonstrating and Reducing Shortcuts in Vision-Language Representation Learning}
\author{%
\name Maurits Bleeker\thanks{Co-first author.} \email m.j.r.bleeker@uva.nl \\
\addr University of Amsterdam, Amsterdam, The Netherlands
\AND
\name Mariya Hendriksen\footnotemark[1] \email m.hendriksen@uva.nl \\ 
\addr AIRLab, University of Amsterdam, Amsterdam, The Netherlands
\AND
\name Andrew Yates \email a.c.yates@uva.nl \\
\addr University of Amsterdam, Amsterdam, The Netherlands
\AND
\name Maarten de Rijke \email m.derijke@uva.nl \\
\addr University of Amsterdam, Amsterdam, The Netherlands
}
\begin{document}

\maketitle

\begin{abstract}
% !TEX root = ../main.tex

\Acfp{VLM} mainly rely on contrastive training to learn general-purpose representations of images and captions.
We focus on the situation when one image is associated with several captions, each caption containing both information shared among all captions and unique information per caption about the scene depicted in the image.
In such cases, it is unclear whether contrastive losses are sufficient for learning task-optimal representations that contain all the information provided by the captions or whether the contrastive learning setup encourages the learning of a simple shortcut that minimizes contrastive loss.
We introduce \textit{\acl{SVL}}: a training and evaluation framework where we inject synthetic shortcuts into image-text data.
We show that contrastive \ac{VLM}s trained from scratch or fine-tuned with data containing these synthetic shortcuts mainly learn features that represent the shortcut. 
Hence, contrastive losses are not sufficient to learn task-optimal representations, i.e., representations that contain all task-relevant information shared between the image and associated captions.
We examine two methods to reduce shortcut learning in our training and evaluation framework:
\begin{enumerate*}[label=(\roman*)]
\item \acl{LTD} and 
\item \acl{IFM}.
\end{enumerate*}
We show empirically that both methods improve performance on the evaluation task, but only partially reduce shortcut learning when training and evaluating with our shortcut learning framework.
Hence, we show the difficulty and challenge of our shortcut learning framework for contrastive \acl{VL} representation learning.

\end{abstract}

%\textbf{TL;DR}: \emph{We propose a framework to examine the shortcut learning problem of \ac{CL} in the context of \acf{VL} representation learning with multiple captions per image and show how this problem can be partially mitigated using a form of text reconstruction and using \acf{IFM}.}

\acresetall

% !TEX root = ../main.tex

\section{Introduction}
\label{sec:introduction}

Recent work on understanding the internal mechanisms of representation learning has brought to attention the problem of shortcut learning~\citep{robinson2021can, chen2021intriguing, scimeca2022which}.
While there are multiple definitions of shortcut learning \citep[e.g., ][]{geirhos2020shortcut, wiles_2022_afinegrained}, in this work we define \emph{shortcuts} as \emph{easy-to-learn discriminatory features that minimize the (contrastive) optimization objective but are not necessarily sufficient for solving the evaluation task}. 
More specifically, we focus on the problem of shortcut learning in the relatively unexplored context of \ac{VL} representation learning with multiple matching captions per image.

\Ac{CL} plays a crucial role in \ac{VL} representation learning. 
Despite the success of non-contrastive approaches, e.g., \citep{bardes2022vicreg}, the dominant paradigm in \ac{VL} representation learning revolves around either fully contrastive strategies~\citep{faghri2018improving, li2019visual, jia2021scaling, radford2021learning} or a combination of contrastive methods with additional objectives~\citep{li2021align, zeng2022multi, li2022blip, zeng2022multi, li2023blip}.
It is standard practice in contrastive \ac{VL} representation learning to sample batches of image-caption pairs and maximize the alignment between the representations of the matching images and captions \citep{radford2019language, jia2021scaling}. 
Given that the typical \ac{VL} benchmarks, e.g., \acl{Flickr30k}\acused{Flickr30k}~\citep{young2014image} and \acl{MS-COCO}\acused{MS-COCO}~\citep{lin2014microsoft, chen2015microsoft}, are constructed in such a way that each image is associated with multiple captions, each caption can be seen as a different \textit{view} of the image it describes. 
Therefore, \ac{CL} with multiple captions per image can be seen as \ac{CL} with multiple views, where each caption provides a different view of the scene depicted in the image.

\Ac{CL} with multiple views, where each view represents a different observation of the same datapoint, has proven to be effective for general-purpose representation learning~\citep{hjelm2019learning, chen2020simple, tian2020contrastive}.
The goal of multi-view (contrastive) representation learning methods is to learn representations that remain invariant to a shift of view, which is achieved by maximizing alignment between embeddings of similar views. 
A core assumption within the multi-view representation learning literature is that task-relevant information is shared across views whereas task-irrelevant information is not shared, given a downstream evaluation task~\citep{zhao2017multi, federici2020learning, tian2020contrastive, shwartz2023compress}.

\begin{wrapfigure}[27]{r}{0.4\textwidth}
	\centering
	\includegraphics[width=0.4\textwidth]{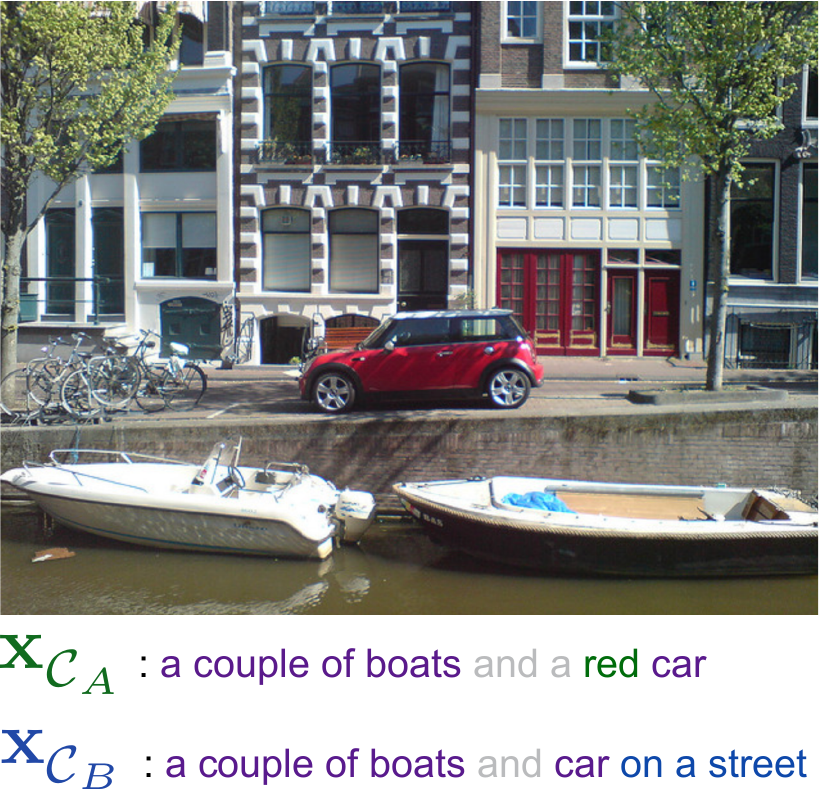}
	\caption{
	Shared vs. caption-specific information given an example of one image and two associated captions $\capt{}{A}$ and $\capt{}{B}$.
	The purple color indicates information shared between the image and both captions.
	The green color indicates task-relevant information specific for $\capt{}{A}$.
	The blue color indicates task-relevant information specific for $\capt{}{B}$.
	}
	\label{fig:shared-vs-specific}
\end{wrapfigure}

An open challenge in the multi-view representation learning domain concerns \emph{learning representations that contain task-relevant information that is not shared among different views, i.e., that may be unique for some views}~\citep{shwartz2023compress, zong2023self}.
In the case of image-caption datasets where each image is paired with at least one corresponding caption, the captions matching the same image do not necessarily share the same information as each caption is distinct and may describe different aspects of the image~\citep{biten2022_is}.
Figure~\ref{fig:shared-vs-specific} illustrates the concept of shared vs.\ caption-specific task-relevant information.
The image is accompanied by two captions: `a couple of boats and a red car' ($\capt{}{A}$) and `a couple of boats and a car on a street' ($\capt{}{B}$).
The shared information between the captions includes `couple of boats' and `car'.
Caption $\capt{}{A}$ provides unique information by describing the car as `red'. Caption $\capt{}{B}$ adds unique contextual details about the location with the phrase `on a street'.
To learn task-optimal representations, it is essential to integrate both the shared and unique information from these captions.
Furthermore, given the typical quality of captions of image-caption datasets~\citep{chen2015microsoft}, we assume that all information present in the captions is relevant.
Hence, each image-caption pair may contain both \emph{shared} task-relevant information, i.e., information shared across all the captions in the tuple, and \emph{unique} task-relevant information, i.e., information not shared with other captions.
Therefore, learning task-optimal representations for the image implies learning all task-relevant information that comprises both shared and caption-specific information.

Another problem of \ac{CL} approaches is related to \emph{feature suppression}. 
\cite{shwartz2023compress} argue that although contrastive loss functions lack explicit information-theoretical constraints aimed at suppressing non-shared information among views, the learning algorithm benefits from simplifying representations by suppressing features from the input data that are not relevant for minimizing the contrastive loss.
Furthermore, \citet{robinson2021can} demonstrate that contrastive loss functions are susceptible to solutions that suppress features from the input data.
In the case of \ac{VL}, \ac{CL} with multiple captions per image where at least one caption contains caption-specific information, the image representation can never have a perfect alignment with all matching captions.
This is due to the misalignment that happens when encoding unique information for the other captions.
Therefore, it is unclear whether contrastive methods can learn task-optimal representations, i.e., representations that contain all information present in the captions associated with the image, or if they learn only the minimal shared information, i.e., information shared between the image and all captions that are sufficient to minimize the contrastive discrimination objective. 
An illustration of minimal shared information and a task-optimal representation is given in Figure~\ref{fig:latent_viz}. 

% !TEX root = ../../main.tex

\begin{figure*}[t!]
	\centering
	\begin{subfigure}[t]{0.37\textwidth}
		\centering
		\includegraphics[width=0.75\textwidth]{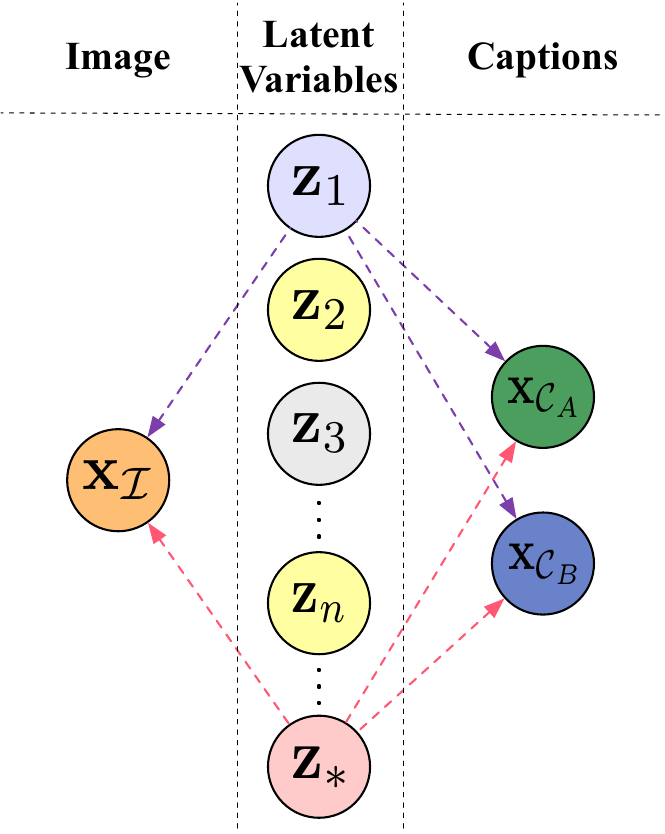}
		\caption{Minimal shared information.}
	\end{subfigure}%
	\qquad
	\begin{subfigure}[t]{0.37\textwidth}
		\centering
		\includegraphics[width=0.75\textwidth]{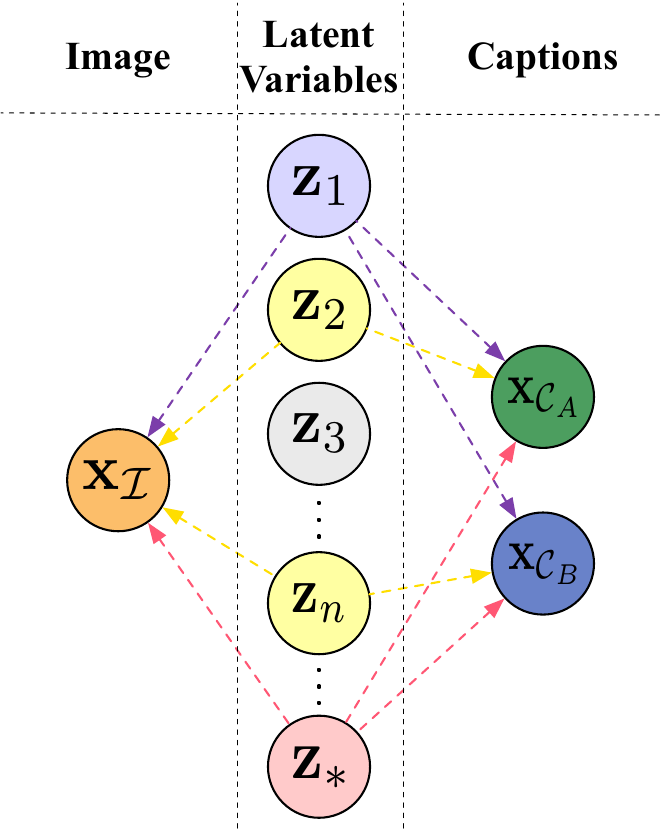}
		\caption{Task-optimal information.}
	\end{subfigure}
	\caption{
	Synthetic shortcuts in the context of minimal shared and task-optimal information for vision-language representation learning with multiple captions per image. 
        The purple color represents features shared among the image and all captions (minimal shared information).
        The yellow color represents caption-specific features (unique information).
        The grey color indicates features that are not present in both the image and any of the captions (task-irrelevant information).
        The red color indicates synthetic shortcuts. 
        We demonstrate that while shortcuts exist in both scenarios, minimal shared information also includes information shared among the image and all associated captions, whereas task-optimal information combines both minimal shared information and caption-specific information.
	}
	\label{fig:latent_viz}
\end{figure*}  

Motivated by the abovementioned problems, we address the following question:
\begin{quote}
\emph{In the context of \ac{VL} representation learning with multiple captions per image, to what extent does the presence of a shortcut hinder learning task-optimal representations?}
\end{quote}

To answer this question, we investigate the problem of shortcut learning for \ac{VL} representation learning with multiple captions per image.
We do this by introducing the \acfi{SVL} framework for adding additional, easily identifiable information to image-caption tuples. 
The information that we add is represented as identifiers that are applied to both image and caption; these identifiers do not bear any semantic meaning. 
The identifiers provide additional shared information between the image and captions, which is a subset of the total shared information between the image and the caption.
For details and examples of shortcuts, refer to Section~\ref{eval:method}, where Figure~\ref{fig:shortcut-example} illustrates an example of an image-caption pair with a shortcut added.
The synthetic shortcuts framework allows us to investigate how much the encoder model relies on the added shortcut during training and evaluation, and hence how much of the relevant information is still captured if a shortcut solution is available.
Overall, our \ac{SVL} framework allows us to investigate the shortcut learning problem in a controlled way. 
We focus on \ac{ICR} as an evaluation task because contrastive losses directly optimize for the \ac{ICR} evaluation task, which assesses the quality of the learned representations by computing a similarity score between images and captions~\citep{radford2021learning, yuksekgonul2023when}.
To investigate the problem, we run experiments on two distinct models:
\begin{enumerate*}[label=(\roman*)]
	\item CLIP~\citep{radford2019language}, a large-scale model that we fine-tune; and
	\item VSE++~\citep{faghri2018improving}, a relatively small model that we train from scratch.
\end{enumerate*}
We evaluate the models' performance on the \ac{Flickr30k}~\citep{young2014image} and \ac{MS-COCO}~\citep{lin2014microsoft, chen2015microsoft} and benchmarks. 
The benchmarks are constructed in such a way that each image is associated with five captions and each caption represents a concise summary of the corresponding image.

Therefore, the contributions of this work are two-fold: 
\begin{enumerate}[label=\Roman*]
	\item \textbf{A framework for investigating the problem of shortcut learning for contrastive \acl{VL} representation learning in a controlled way}:
	We introduce the \textit{\acl{SVL}} framework.
	The framework enables the injection of synthetic shortcuts into image-caption tuples in the training dataset. 
	We use the framework to investigate and understand the extent to which contrastive \ac{VL} models rely on shortcuts when a shortcut solution is available. 
	We run our experiments using CLIP and VSE++, two distinct \acp{VLM}. We evaluate the models' performance on the \ac{Flickr30k} and \ac{MS-COCO} benchmarks.
	We evaluate the effectiveness of contrastive \ac{VL} models by comparing their performance with and without synthetic shortcuts.
	We demonstrate that both models trained from scratch and fine-tuned, large-scale pre-trained foundation models mainly rely on shortcut features and do not learn task-optimal representations.
	Consequently, we show that contrastive losses mainly capture the easy-to-learn discriminatory features that are shared among the image and all matching captions, while suppressing other task-relevant information. 
	Hence, we argue that contrastive losses are not sufficient to learn task-optimal representations for \ac{VL} representation learning.
	\item \textbf{We present two shortcut learning reduction methods on our proposed training and evaluation framework:}  We investigate \acf{LTD} and \acf{IFM} using our \ac{SVL} training and evaluation framework. 
	While both methods improve performance on the evaluation task, our framework poses challenges that existing shortcut reduction techniques can only partially address, as the performance is not on par with models trained without synthetic shortcuts.
	These findings underline the importance and complexity of our framework in studying and evaluating shortcut learning within the context of contrastive VL representation learning.
\end{enumerate}

% !TEX root = ../main.tex

\section{Background and Analysis}
\label{sec:background}

In this section, we present the notation, setup, and assumptions on which we base the work. Additionally, we conduct an analysis of contrastive \ac{VL} representation learning with multiple captions per image.

\subsection{Preliminaries}
\label{subsec:notation-assumptions}

\header{Notation} We closely follow the notation from \citep{bleeker2023reducing}. 
See Table~\ref{tab:notation} for an overview.
Let $\mathcal{D}$ be a dataset of $N$ image-caption tuples: $\mathcal{D} = \left\{\left(\img{i}, \{\capt{i}{j}\}_{j=1}^{k} \right)\right\}_{i=1}^{N}$. 
Each tuple $i \in N$ contains one image $\img{i}$ and $k$ captions $\capt{i}{j}$, where $1 \leq j \leq k$. 
All captions in tuple $i \in N$ are considered as matching captions w.r.t.\ image $\img{}$ in the tuple $i$.
The latent representation of an image-caption pair from a tuple $i$ is denoted as $\zimg{i}{}$ and $\zcapt{i}{j}$ respectively.
During training, we sample image-caption pairs from the dataset $\mathcal{D}$ and optimize for the evaluation task $T$. 
We include all captions in the dataset once per training epoch, hence, each image is sampled $k$ times.

Given an image $\img{}$, a set of $k$ associated captions $K = \{\capt{}{j}\}_{j=1}^{k}$, and one caption randomly sampled from the set $\capt{}{} \in K$,
we define the following representations:
\begin{enumerate*}[label=(\roman*)]
	\item $\zcaptsuff{}{}$ as \emph{sufficient} representation of the caption $\capt{}{}$ that describes the image $\img{}$;
	\item $\zimgsuff{}$ as representation of the image $\img{}$ \emph{sufficient for the caption} $\capt{}{}$;
	\item $\zminsuff{}$ as representation of the image $\img{}$ that is \emph{minimally sufficient for the caption} $\capt{}{}$; and
	\item $\zoptimal{}{K}$ as representation of the image $\img{}$ that is \emph{optimal for the set of captions} $K$ given the task $T$.
\end{enumerate*}

In addition, we write
$\sct{}$ for a synthetic shortcut,
$S$ for the original shared information, i.e., information that does not contain synthetic shortcuts,
$S^{+}$ for the shared information that includes a synthetic shortcut,
and $R^{+}$ for task-relevant information that contains a synthetic shortcut.

In the context of task relevance, we define $R$ and $\neg R$ as task-relevant and task-irrelevant information, respectively, and $C$ as task-relevant information specific for caption $\capt{}{}$.

\header{Setup} 
We work with a dual-encoder setup, with an image encoder and a caption encoder that do not share parameters. 
The \emph{image encoder} $f_{\theta}(\cdot)$ takes an image $\img{}$ as input and returns its latent representation: $\zimg{}{} :=  f_{\theta}(\img{})$.
Similarly, the \emph{caption encoder} $g_{\phi}(\cdot)$ takes a caption $\capt{}{}$ as input, and encodes the caption into a latent representation: $\zcapt{}{} :=  g_{\phi}(\zcapt{}{})$. 
Both $\zcapt{}{}$ and $\zimg{}{}$ are unit vectors projected into $d$-dimensional multi-modal space: $\zcapt{}{} \in \sR^d$, $\zimg{}{} \in \sR^d$.
For an overview of notation, we refer to Appendix~\ref{appendix:notation}, Table~\ref{tab:notation}.

\header{Assumptions}
Given an image-caption tuple, 
we assume that each caption in the tuple is distinct from the other captions in the tuple.
We also assume that each caption in the tuple contains two types of task-relevant information:
\begin{enumerate*}[label=(\roman*)]
	\item shared information, i.e., information shared with other captions in the same tuple,
	and 
	\item caption-specific information, i.e., information that is not shared with the other captions.
\end{enumerate*}
For simplicity, we base our subsequent analysis on tuples where one image $\img{}{}$ is associated with two captions $\capt{}{A}$ and $\capt{}{B}$: $\left(\img{}, \{\capt{}{A}, \capt{}{B}\} \right)$. 
However, the analysis described in this section can be extended to a case with more than two captions. We treat images and captions as views and define  $\img{}{}$, $\capt{}{A}$, and $\capt{}{B}$ to be random variables of an image and two matching captions, with the joint distribution $p(\img{}{}, \capt{}{A}, \capt{}{B})$.
For more details on assumptions and problem definition, we refer to Appendix~\ref{appendix:problem}.

\subsection{Analysis of Contrastive Vision-Language Representation Learning for Multiple Captions per Image}

\header{InfoMax} We start our analysis of contrastive \ac{VL} representation learning by introducing the InfoMax optimization objective, a typical loss for \ac{VL} representation learning.
The goal of an InfoMax optimization objective, e.g., InfoNCE~\citep{oord2018representation}, is to maximize the \ac{MI} between the latent representations of two views of the same data~\citep{tschannen2020on}. 
Therefore, the optimization objective is equivalent to: $ \max_{f_{\theta}, g_{\phi} } I (\zimg{}{} ; \zcapt{}{} ) $ where $\zimg{} := f_{\theta}(\img{}{})$ and $\zcapt{}{} := g_{\phi}(\capt{}{})$.
%\ay{This definition of task-relevant seems fine for our purposes (and I guess follows prior work), but I think it's really about the dataset rather than the task? It's about the information that exists in a set of existing queries, not about the (many) queries that could reasonably be asked given the task.}

\header{Minimally Sufficient Image Representation} During training, batches of image-caption pairs are sampled. 
The optimization involves maximizing the \ac{MI} between the image representation $\zimg{}{}$ and the matching caption representation $\zcapt{}{}$.
\cite{wang2022rethinking} argue that, since all supervision information for one view (i.e., the image) comes from the other view (i.e., the caption), the representations learned contrastively are approximately minimally sufficient.
Following \citep{tian2020what, wang2022rethinking}, we extend the definition of sufficient representation to \ac{VL} context and define sufficient caption representations, sufficient image representations, and minimally sufficient image representation.

\begin{definition}[Sufficient caption representation]
	\label{def:suff_capt_repr}	
	Given an image $\img{}{}$, and a set of matching captions $\mathcal{C} = \{ \capt{}{A}, \capt{}{B} \}$,
	the representation $\zcaptsuff{}{}$ of caption $\capt{}{} \in \mathcal{C}$ is sufficient for image $\img{}{}$ if, and only if, $I(\zcaptsuff{}{}; \img{}{} ) = I(\capt{}{}; \img{}{})$.
\end{definition}
The sufficient caption representation $\zcaptsuff{}{}$ contains all the information about image $\img{}{}$ in caption $\capt{}{}$.
 
\begin{definition}[Sufficient image representation]
	\label{def:suff_img_repr}		
	Given an image $\img{}{}$, and a set of matching captions $\mathcal{C} = \{ \capt{}{A}, \capt{}{B} \}$,
	the representation $\zimgsuff{}$ of image $\img{}{}$ is sufficient for caption $\capt{}{} \in \mathcal{C}$ if, and only if,  $I(\zimgsuff{}; \capt{}{} ) = I(\img{}{} ; \capt{}{})$.
\end{definition}

Similarly, the sufficient image representation $\zimgsuff{}$ contains all the shared information between an image $\img{}{}$ and a caption $\capt{}{}$. 
% We define it as \emph{pairwise} sufficient because it is defined between a single image-caption pair. 
Note that a sufficient image representation can be sufficient w.r.t.\ multiple captions.

\begin{definition}[Minimally sufficient image representation]
	\label{def:zimgminsuff}		
	Given an image $\img{}{}$, and a set of matching captions $\mathcal{C} = \{ \capt{}{A}, \capt{}{B} \}$,
	the sufficient image representation $\zminsuff{}$ of image $\img{}{}$  is minimally sufficient for caption $\capt{}{} \in \mathcal{C}$ if, and only if, $I(\zminsuff{}; \img{}{}) \leq I(\zimgsuff{}; \img{}{})$, for all $\zimgsuff{}$ that are sufficient.
\end{definition}
Intuitively, $\zminsuff{}$ comprises the smallest amount of information about $\img{}{}$ (while still being sufficient) and, therefore, only contains the information that is shared with caption $\capt{}{}$, i.e., the non-shared information is suppressed.
% Since we assume that everything in the caption is relevant w.r.t.\ the image it describes (see Assumption~\ref{asm:1}), a sufficient caption representation $\zcaptsuff{}{}$ is minimally sufficient.  

\begin{figure*}[t!]
	\centering
	\includegraphics[width=0.85\textwidth]{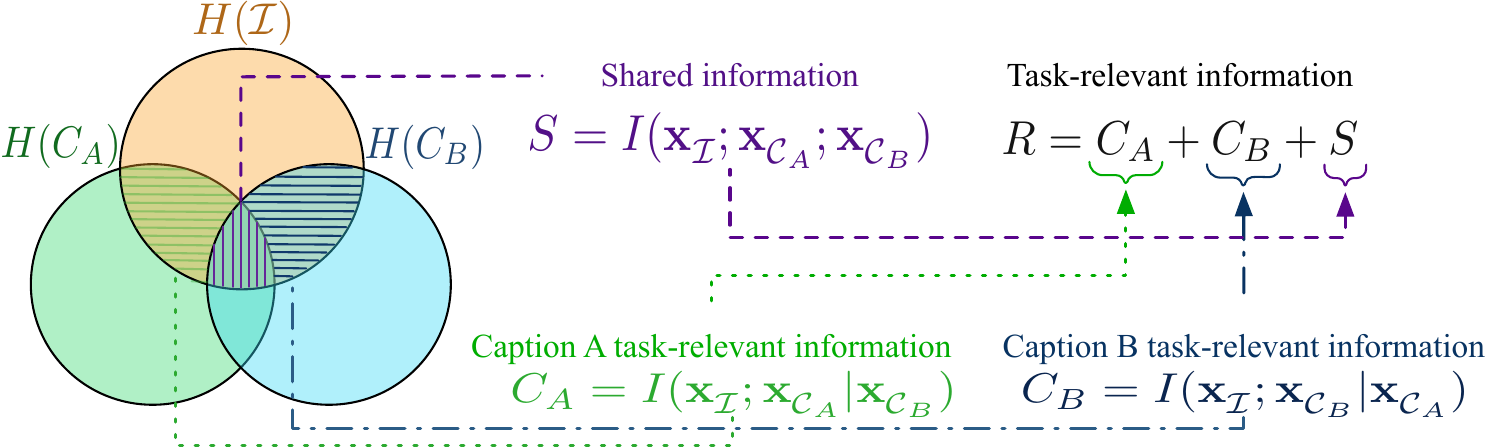}
	\caption{
		We define $H(\img{}{})$ as image information, $H(\capt{}{A})$ and $H(\capt{}{B})$ as caption information;
		both captions only describe the information depicted in the image and contain shared and caption-specific information.
		We further define
		$C_A = I( \img{}{} ; \capt{}{A} \mid \capt{}{B})$
		and
		$C_B = I( \img{}{} ; \capt{}{B} \mid \capt{}{A})$ as caption-specific information; 
		$S = I(\img{}{} ; \capt{}{A} ; \capt{}{B})$ as shared information;
		$\neg R = H( \img{}{} \mid \capt{}{A} , \capt{}{B} ) $ as task-irrelevant information;
		$R = C_A + C_B + S$ as task-relevant information.
	}
	\label{fig:venn}
\end{figure*} 

\header{Task-Optimal Image Representation} The definition of task-optimal image representation is based on the notion of task-relevant information. 
In the context of \ac{VL} representation learning with multiple captions per image, we define task-relevant information as all information described by the matching captions. That includes both caption-specific and shared information.
Consequently, task-optimal image representation is image representation that is sufficient w.r.t. all matching captions.

Formally, following assumptions from Appendix~\ref{app:assumptions}, we define task-relevant information $R$ as all the information described by the matching captions. The task-relevant information can be expressed as follows:
\begin{equation}
\begin{split}
	\underbrace{R}_{\substack{\text{Task-relevant}\\ \text{information}}} & =
	\underbrace{H(\img{}{} )}_{\substack{\text{Image}\\ \text{information}}}
	- \underbrace{H(\img{}{} \mid \capt{}{A} , \capt{}{B})}_{\substack{\text{Task-irrelevant}\\ \text{information}}}
	\\
	& = 
	\underbrace{I( \img{}{} ; \capt{}{A} \mid \capt{}{B})}_{\substack{\text{$C_{A}$-specific}\\ \text{task-relevant information}}}
	+
	\underbrace{I( \img{}{} ; \capt{}{B} \mid \capt{}{A})}_{\substack{\text{$C_{B}$-specific}\\ \text{task-relevant information}}}
	+
	\underbrace{I(\img{}{} ; \capt{}{A};\capt{}{B})}_{\substack{\text{Shared}\\ \text{information}}}.
\end{split}	
	\label{eq:task_relevant}
\end{equation}
Similarly, task-irrelevant information $\neg R$ is the image information not described by the captions. Figure~\ref{fig:venn} illustrates both definitions.

The multi-view assumption states that task-relevant information for downstream tasks comes from the information shared between views~\citep{shwartz2023compress}.  
However, in the case of \ac{VL} representation learning with multiple captions per image, task-relevant information $R$ includes both shared information $S$, and caption-specific information $C_A$ and $C_B$ (Eq.~\ref{eq:task_relevant}).

\begin{definition}[Task-optimal image representation]
	\label{def:opt_z}	
	Given an image $\img{}{}$, and a set of matching captions $\mathcal{C} = \{ \capt{}{A}, \capt{}{B} \}$, 
	the representation $\zoptimal{}{\mathcal{C}}$ is task-optimal image representation for all matching captions if, and only if, $I(\zoptimal{}{\mathcal{C}}; \capt{}{}) = I(\img{}{}; \capt{}{})$, for all $\capt{}{} \in \mathcal{C}$.
\end{definition}
In other words, task-optimal image representations contain all the information that the image shares with the matching captions. 
Hence, a task-optimal image representation is sufficient w.r.t.\ all matching captions. 
The information contained in the task-optimal image representation includes both shared and caption-specific information.
Therefore, a task-optimal image representation can never be a minimally sufficient image representation w.r.t.\ to a specific caption.

\begin{thm}{1}[Suboptimality of contrastive learning with multiple captions per image]
	\label{thm:suboptimality-main}	
	Given an image $\img{}{}$, a set of matching captions $\mathcal{C} = \{ \capt{}{A}, \capt{}{B} \}$, and a contrastive learning loss function $\infonce{}$ that optimizes for task $T$, 
	image representations learned during contrastive learning will be minimally sufficient and will never be task-optimal image representations.
\end{thm}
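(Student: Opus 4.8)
The plan is to argue in two parts, matching the two clauses of the theorem: (i) that a contrastive InfoNCE optimum yields a \emph{minimally sufficient} image representation w.r.t.\ the sampled caption, and (ii) that such a representation cannot simultaneously be \emph{task-optimal} in the sense of Definition~\ref{def:opt_z}. The first part is essentially an adaptation of the argument of \citet{wang2022rethinking}: because the only supervisory signal for $\zimg{}{}$ comes from the matching caption $\capt{}{}$, at the InfoNCE optimum $\zimg{}{}$ retains information about $\img{}{}$ only insofar as it is predictive of $\capt{}{}$, so $I(\zimg{}{};\img{}{})$ is driven down to $I(\zimg{}{};\capt{}{})=I(\img{}{};\capt{}{})$, which is exactly the minimality condition in Definition~\ref{def:zimgminsuff}. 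I would state this as a lemma (or simply invoke the cited result), being careful about the sampling protocol described in the Setup: each training step draws a single caption $\capt{}{}$ uniformly from the $k$ captions of the tuple, so the ``other view'' seen by the loss is one caption at a time, not the whole set $\mathcal{C}$.

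\textbf{The key step.} The heart of the argument is the incompatibility of minimal sufficiency with task-optimality. Suppose, for contradiction, that the learned $\zimg{}{}$ is both minimally sufficient for the sampled caption (say $\capt{}{A}$) and task-optimal for $\mathcal{C}=\{\capt{}{A},\capt{}{B}\}$. Minimal sufficiency for $\capt{}{A}$ forces $I(\zimg{}{};\img{}{}) = I(\img{}{};\capt{}{A})$, i.e.\ the representation carries no more image information than what is shared with $\capt{}{A}$; in the Venn-diagram language of Figure~\ref{fig:venn}, $\zimg{}{}$ can encode at most $S + C_A$ and must suppress $C_B = I(\img{}{};\capt{}{B}\mid\capt{}{A})$. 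On the other hand, task-optimality (Definition~\ref{def:opt_z}) requires $I(\zimg{}{};\capt{}{B}) = I(\img{}{};\capt{}{B}) = S + C_B$. But any information $\zimg{}{}$ shares with $\capt{}{B}$ must be information it shares with $\img{}{}$ (since $\zimg{}{}$ is a function of $\img{}{}$, by the data-processing inequality $I(\zimg{}{};\capt{}{B}) \le I(\zimg{}{};\img{}{})$), hence $S + C_B \le I(\zimg{}{};\img{}{}) = S + C_A$, forcing $C_B \le C_A$; by the symmetric argument (the sampler picks $\capt{}{B}$ with equal probability, or simply by swapping roles) we would also need $C_A \le C_B$ whenever the minimal-sufficiency constraint binds for $\capt{}{B}$. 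Under the standing assumption that the captions are distinct and each carries genuine caption-specific information, $C_A, C_B > 0$ and they are not forced to coincide, so the constraint $I(\zimg{}{};\img{}{}) = I(\img{}{};\capt{}{A})$ is strictly violated by any task-optimal representation: a task-optimal $\zimg{}{}$ must carry $S + C_A + C_B$ bits about the image (it is sufficient for \emph{both} captions), which strictly exceeds the minimally-sufficient budget $S + C_A$ whenever $C_B > 0$. This contradiction establishes the ``never task-optimal'' clause.

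\textbf{Expected obstacle.} The main delicacy is not the information-inequality bookkeeping but pinning down precisely what ``learned during contrastive learning'' means, so that the minimal-sufficiency claim is a theorem rather than a modeling assumption. The clean route is to take as given (citing \citet{tian2020what, wang2022rethinking}) that the InfoNCE optimum over sufficiently expressive encoders produces a representation that is minimally sufficient for whichever view supplies the supervision — here, the per-step sampled caption — and then the rest is the short data-processing-inequality contradiction above. A secondary subtlety is the multi-caption sampling: one should note that although different captions are seen across epochs, the loss at any step only couples $\zimg{}{}$ to one caption, and minimality is a ``$\le$ for all sufficient representations'' condition, so the representation the optimizer is pushed toward is the one minimal w.r.t.\ the information shared across the captions it is actually contrasted against — still strictly smaller than the task-optimal target whenever any caption-specific information exists. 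I would flag the expressivity/optimization idealization explicitly (as the paper's framing already does) rather than attempt to prove convergence from scratch.
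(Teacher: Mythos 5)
Your proof is correct and takes essentially the same route as the paper's: both treat minimal sufficiency of the contrastively learned representation as a cited premise (via \citet{wang2022rethinking}), identify the minimally sufficient representation's information content as $S + C_i$ versus the task-optimal representation's $S + C_A + C_B$, and conclude by noting the strict gap whenever the other caption's specific information is positive. Your contradiction framing with the explicit data-processing-inequality step is a minor repackaging of the paper's direct case analysis on which of $C_A$, $C_B$ is positive, but the core counting argument is identical.
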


The proof is provided in Appendix~\ref{app:analysis-of-contrastive}.
Rephrasing Theorem~\ref{thm:suboptimality-main}, given an image and two captions that form two image-caption pairs, $(\img{}{}, \capt{}{A})$ and $(\img{}{}, \capt{}{B})$, and assuming that contrastive loss optimizes the image encoder to be minimally sufficient w.r.t.\ to caption $\capt{}{A}$ during a training step, all task-relevant information $C_{B}$ specific to caption $\capt{}{B}$ will be suppressed in $\zimg{}{}$. Hence, the resulting image representation will not be optimal for the task $T$.

Theorem~\ref{thm:suboptimality-main} indicates a discrepancy between minimally sufficient representations learned during contrastive training with the InfoNCE loss and the task-optimal image representations in the context of learning \ac{VL} representations with multiple captions per image.
Although the InfoMax loss does not have an explicit constraint to compress information, prior work indicates that feature suppression is happening \citep{shwartz2023compress, robinson2021can}. 
Hence, we question if contrastive loss can be used to learn task-optimal image representations in the context of multiple captions per image.

Furthermore, Theorem~\ref{thm:suboptimality-main} implies that in the context of contrastive \ac{VL} representation learning with multiple captions per image, the minimally sufficient representation, which discards non-shared information, is not the same as the task-optimal representation that comprises both caption-specific and shared information.  
This suggests that the features learned during contrastive learning might be shortcuts, i.e., easy-to-detect discriminatory features that minimize the contrastive optimization objective but are not necessarily sufficient for solving the evaluation task.
To examine this problem, we introduce a synthetic shortcuts framework that allows us to investigate the problem of suboptimality of contrastive learning with multiple captions per image in a controlled way.

% !TEX root = ../main.tex

\section{Synthetic Shortcuts to Control Shared Information}
\label{eval:method}

In Section~\ref{sec:background} we show the suboptimality of the contrastive InfoNCE loss with multiple captions per image.
In the case of real-world \ac{VL} datasets with multiple captions per image, there are no annotations that indicate the information shared between the image and captions and the information specific to each caption.
Hence, we cannot directly measure how much of the shared and unique information is captured by the representations.

\begin{wrapfigure}[17]{R}{0.4\textwidth}
	\centering
	\includegraphics[width=0.4\textwidth]{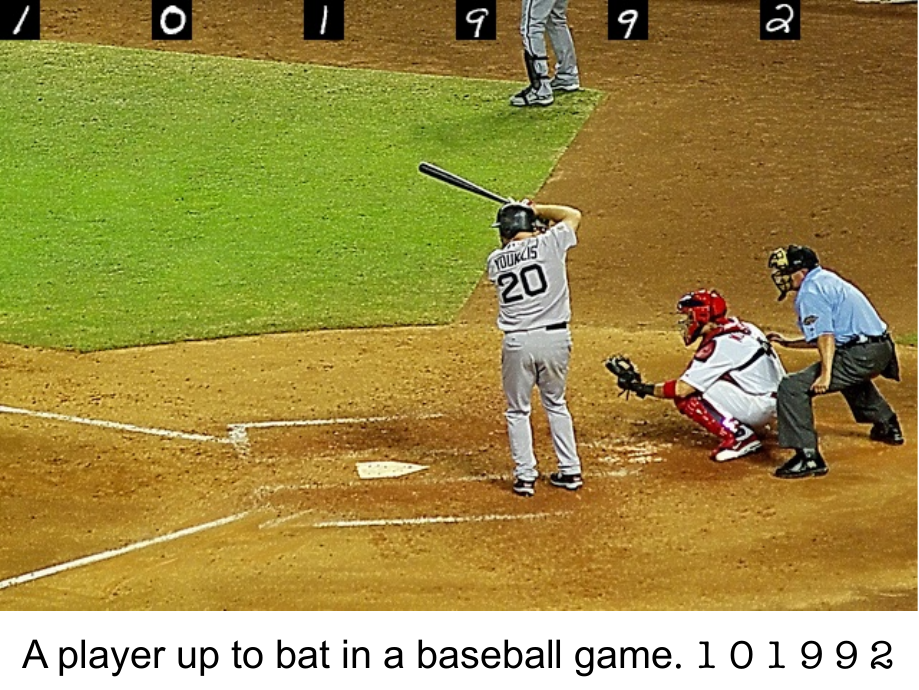}
	\caption{An image-caption pair from the MS-COCO dataset with a shortcut added to both the image and the caption.}
	\label{fig:shortcut-example}
\end{wrapfigure}

\header{Synthetic Shortcuts}
In this section, we introduce the \textit{\acf{SVL}} training and evaluation framework.
We denote the \textit{synthetic shortcuts for image-caption data} as $\sct{}$.
The purpose of the framework is to introduce additional and easily identifiable information shared between an image and the matching captions that lacks any semantic meaning. 
The shortcuts we use in this work are represented as numbers that we add to images and captions.
For images, we add the shortcut number by adding MNIST images as an overlay to the original images.
For captions, we append the numbers of the shortcut as extra tokens at the end of the caption.

Figure~\ref{fig:shortcut-example} illustrates an example of an image-caption pair with an added shortcut.
The example contains an image with the caption: `A player up to bat in a baseball game. 1 0 1 9 9 2.'
Here, `1 0 1 9 9 2' is a shortcut added to both the image and the caption. 
For the image modality, we add the shortcut by overlaying MNIST images at the top of the original image.
For the text modality, we append the shortcut as additional tokens at the end of the caption.
This identifier provides an additional link between the image and the caption without carrying any semantic meaning related to their content. 
Additional examples are shown in Figure~\ref{fig:shortcut_examples}.

If contrastive losses learn task-optimal representations, then the presence of synthetic shortcuts should not negatively impact the evaluation performance, since synthetic shortcuts represent additional information and the remaining task-relevant information is intact.
By incorporating synthetic shortcuts into the image-caption dataset, the shared information would include the information that was originally shared and the synthetic shortcut:
$S^{+} = S + \sct{}$.
Hence, the task-relevant information would comprise caption-specific information that was originally shared and a synthetic shortcut:
$R^{+} = C_A + C_B + S + \sct{}$. 
If injecting a synthetic shortcut influences the performance negatively, we can conclude that by learning to represent a synthetic shortcut the model suppresses other task-relevant information in favor of the shortcut, hence the representation is not task-optimal.
The setup is inspired by the ``datasets with explicit and controllable competing features,'' introduced by \cite{chen2021intriguing}, but we adapt this setup to the \ac{VL} scenario.

For experiments, we use the \ac{Flickr30k} and \ac{MS-COCO} image-caption datasets, that consist of image-caption tuples, each image is associated with five captions.
During training, we sample a batch of image-caption pairs $\mathcal{B} = \{(\img{i}, \capt{i}{j}), \dots\}_{i=1}^{|\mathcal{B}|}$, from dataset $\mathcal{D}$, and apply shortcut sampling.
We inject the shortcuts in a manner that preserves the original information of the images and captions. 
Furthermore, we append the shortcut after applying data augmentations to ensure that the shortcut is present in both the images and captions (i.e., the shortcut is not augmented away).
We refer to Figure~\ref{fig:shortcut_examples} for some examples.
The training, evaluation, and implementation details of the shortcut sampling are provided in Appendix~\ref{app:experimental-shortcutsampling}.

We define the following experimental setups:
\begin{enumerate}[label=\Roman*]
	\item \emph{No shortcuts}: As a baseline, we fine-tune a pre-trained CLIP~\citep{radford2021learning} and train VSE++~\citep{faghri2018improving} from scratch on \ac{Flickr30k} and \ac{MS-COCO}, without using any shortcuts. 
	The experimental setup for training both models is provided in Appendix~\ref{app:experimental-models} and \ref{app:experimental-training}. 
	The goal of this setup is to show the retrieval evaluation performance without adding any shortcuts for both a large-scale pre-trained foundation model and a small-scale model trained from scratch.
	\item \emph{Unique shortcuts}: We add a unique shortcut to each image-caption tuple $i \in\mathcal{D}$ in the dataset. 
	In this setup, each image caption pair can be uniquely matched during training by only detecting the shortcut. 
	For each tuple $i \in \mathcal{D}$, we use the number $i$ as the number of the shortcut we inject to the image and captions in the tuple.
	If the contrastive loss learns task-optimal representations, the downstream evaluation performance should not decrease when training with unique shortcuts.
	\item \emph{Unique shortcuts on only one modality}: To show that the shortcuts do not interfere with the original task-relevant information ($S, C_A$, and $C_B$) of the images and captions, we create a dataset with only shortcuts on either the image or caption modality. Therefore, the shortcut cannot be used by the encoders to match an image-caption pair. 
	Hence, we expect the encoders to ignore the shortcuts and extract the features from the original data similar to the features learned by the baseline models in experimental setup \rom{1}.
	\item \emph{N bits of shortcuts}: In this setup, for each image-caption pair in the training batch $\mathcal{B}$, we randomly sample a shortcut number from the range $[0, 2^{n}]$, where $n$ is the number of bits.
	The higher the value of $n$, the more image-caption pairs in the training batch will have by expectation a unique shortcut, and, the less the model has to rely on $S$ and the remaining task-relevant information to solve the contrastive objective. 
	The goal of this setup is to show that, the more unique (shortcut) information is present per sample in the batch, the less contrastive models rely on the remaining task-relevant information.
\end{enumerate}
 It should be noted that the shortcuts we add are independent of the image-caption pairs. 
 However, the goal of the \ac{SVL} framework is to measure the effect of the presence of additional easy-to-detect shared information on the learned representations. 

 \header{Evaluation Method}
To show the effect of the injected shortcuts on retrieval evaluation performance, we evaluate both with and without adding the shortcuts during evaluation.
When training with unique shortcuts, we add a unique shortcut to each tuple in the test set as well.
When training with shortcuts on either one of the two modalities, we only evaluate without shortcuts to show that training with shortcuts on one modality does not influence performance.
When training with $n$ bits of shortcuts, we add the shortcut $\mod(i, n)$ (modulo) to each tuple $i$ in the evaluation set, to make sure we use the same number of shortcuts during evaluation as during training.
To facilitate the reproducibility and support further research, we provide the code with our paper.\footnote{ \url{https://github.com/MauritsBleeker/svl-framework}}

% !TEX root = ../main.tex
\section{Synthetic Shortcuts and their Impact on Learned Representations and Evaluation Performance}
\label{eval:results}

\begin{figure}[t!]
\centering
\begin{subfigure}[b]{\textwidth}
	\includegraphics[width=1\linewidth]{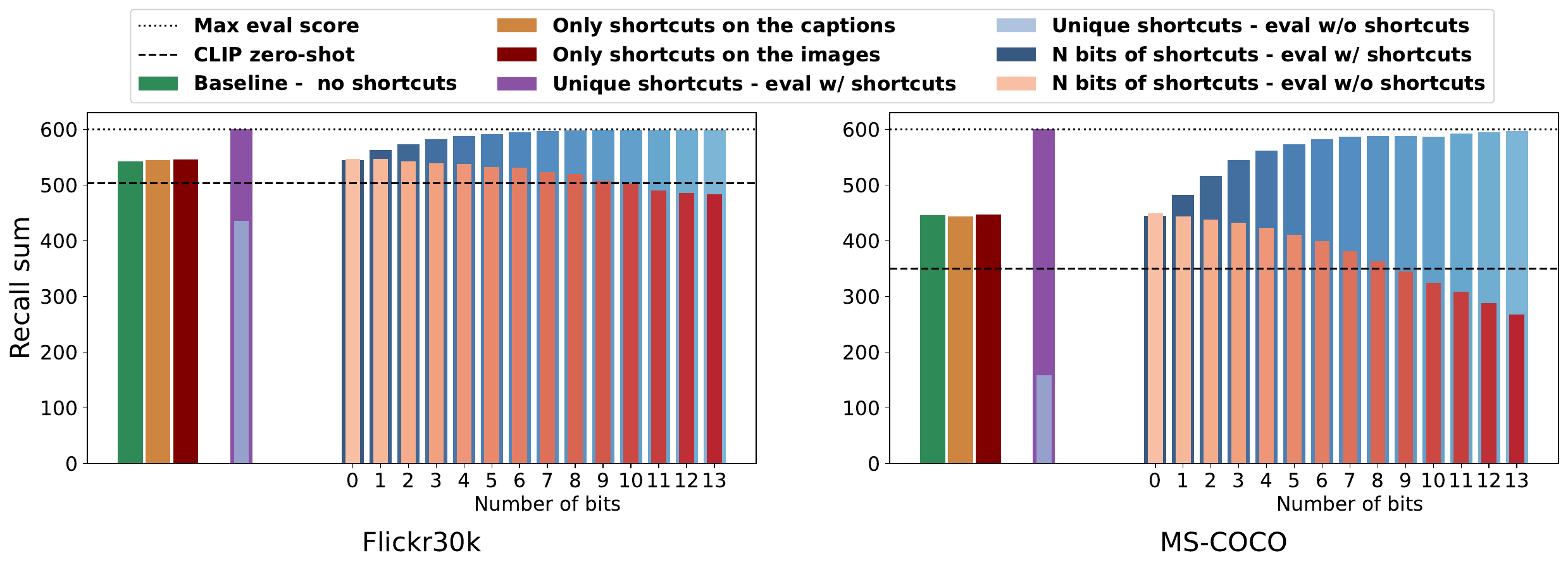}
	\vspace*{-6mm}
	\caption{Evaluation results for the CLIP model when using different shortcut sampling setups. }
	\label{fig:results_clip}
\end{subfigure}

\vspace*{4mm}
\begin{subfigure}[b]{\textwidth}
	\includegraphics[width=1\linewidth]{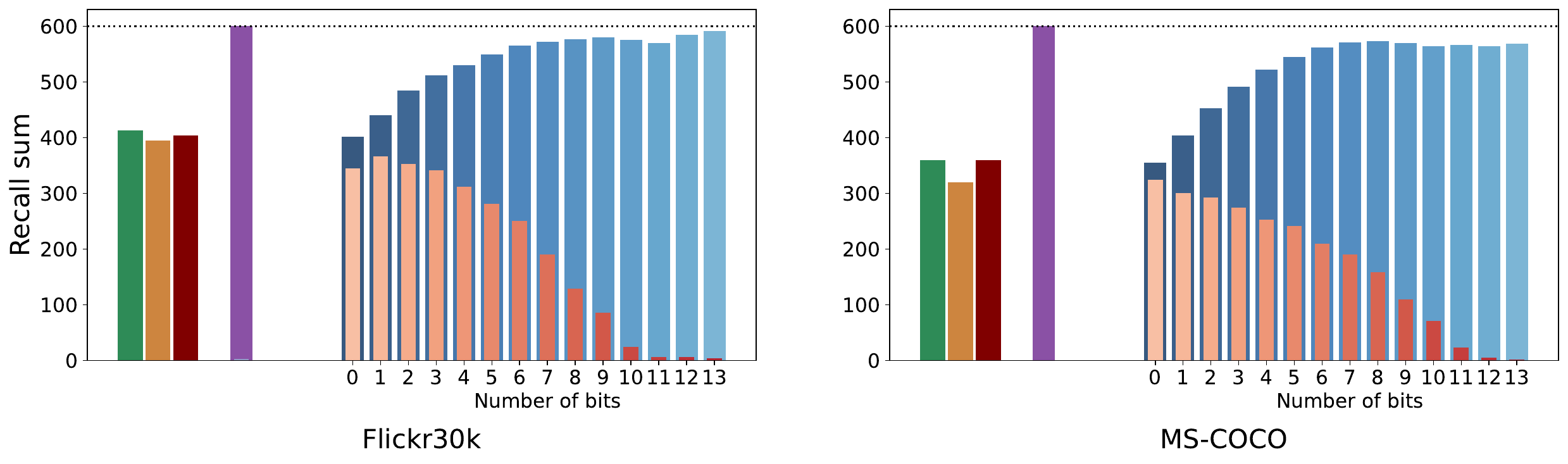}
	\vspace*{-6mm}
	\caption{Evaluation results for the VSE++ model when using different shortcut sampling setups. }
	\label{fig:results_vse} 
\end{subfigure}
\caption{Effect of synthetic shortcuts on CLIP and VSE++ performance on \ac{ICR} task.
The dotted line represents the maximum achievable recall sum, while the dashed line for CLIP indicates its zero-shot evaluation performance
(Best viewed in color.)}
\label{fig:shortcuts_results}
\end{figure}

\subsection{Findings}
First, we train and evaluate both a CLIP and VSE++  without shortcuts on the \ac{Flickr30k} and \ac{MS-COCO} dataset for the image-caption retrieval task as a baseline. 
We use the recall sum (i.e., the sum of $R@1, R@5$, and $R@10$ for both \ac{i2t} and \ac{t2i} retrieval) as evaluation metric (see Appendix~\ref{appendix:task} for the evaluation task description). 
We visualize the results in~Figure~\ref{fig:shortcuts_results}.
The dotted line (in Figure~\ref{fig:results_clip} and~\ref{fig:results_vse}) indicates the maximum evaluation score (i.e., 600). 
For CLIP, we also provide the zero-shot performance of the model, indicated by the dashed line in Figure~\ref{fig:results_clip}. 
When referring to specific results in~Figure~\ref{fig:shortcuts_results}, we use the color of the corresponding bar and legend key in brackets in the text.

Based on Figure~\ref{fig:shortcuts_results}, we draw the following conclusions: 
\begin{enumerate}[label=\Roman*]
	\item When training CLIP and VSE++ with only shortcuts on either the caption modality (in Figure~\ref{fig:shortcuts_results}, the corresponding bar/legend box is colored \caponly) or on the image modality~(\imgonly, in Figure~\ref{fig:shortcuts_results}), we do not observe a drop in evaluation scores for CLIP compared to the baseline model (\baseline, in Figure~\ref{fig:results_clip}).
	For VSE++ we only observe a slight drop in evaluation score when training with shortcuts on the caption modality (again \caponly, mainly for \ac{MS-COCO}, in Figure~\ref{fig:results_vse}).  
	Therefore, we conclude that the synthetic shortcuts do not interfere with the original shared information $S$ or other task-relevant information.
	
	\item When training the models with \textit{unique shortcuts}, we observe for both CLIP and VSE++ that when evaluating with shortcuts~(\uswith, in Figure~\ref{fig:shortcuts_results}), the models obtain a perfect evaluation score. 
	When evaluating without shortcuts (\uswithout, in Figure~\ref{fig:shortcuts_results}) the evaluation score for VSE++ drops to zero and for CLIP below the zero-shot performance. 
	We conclude that with unique shortcuts:
	\begin{enumerate*}[label=(\roman*)]
		\item both CLIP and VSE++ fully rely on the shortcuts to solve the evaluation task,
		\item VSE++ has not learned any other shared or task-relevant information other than the shortcuts (since it is trained from scratch, only detecting the shortcuts is sufficient to minimize the contrastive loss), and 
		\item fine-tuned CLIP has suppressed original features from the zero-shot model in favor of the shortcuts.	
	\end{enumerate*}
	
	\item When training the models with \textit{N bits of shortcuts}, we observe for both CLIP and VSE++ that the larger the number of bits we use during training and when evaluating without shortcuts~(\bswithout, in Figure~\ref{fig:shortcuts_results}), the bigger the drop in evaluation performance. 
	When we evaluate with shortcuts~(\mbox{\bswith,} in Figure~\ref{fig:shortcuts_results}), the evaluation performance improves as we use more bits compared to the baseline without shortcuts~\baseline, in Figure~\ref{fig:shortcuts_results}). 
	For VSE++, evaluating without shortcuts~(\bswithout, in Figure~\ref{fig:results_vse}) results in a drop to zero when having a large number of bits.
	 For CLIP, the evaluation performance drops below the zero-shot performance. 
	 If we train with 0 bits of shortcuts (i.e., the shortcut is a constant) we do not observe any drop or increase in evaluation scores for CLIP.
\end{enumerate}

\subsection{Upshot}
Given the findings based on Figure~\ref{fig:shortcuts_results} we conclude that a contrastive loss (i.e., InfoNCE) mainly learns the easy-to-detect minimal shared features among image-caption pairs that are sufficient to minimize the contrastive objective while suppressing the remaining shared and/or task-relevant information.  
If contrastive losses are sufficient to learn task-optimal representations for image-caption matching, these shortcuts should not adversely impact the evaluation performance.
Moreover, if the contrastive loss would only learn features that are shared among the image and all captions (i.e, $S$), we should not observe a drop in performance to 0 for the VSE++ model when training with unique shortcuts, since there is still a lot of task-relevant information present in $S$.
Especially in a training setup where a model is trained from scratch or fine-tuned on small datasets, the easy-to-detect features are likely not equivalent to all task-relevant information in the images and captions.
Hence, we conclude that contrastive loss itself is not sufficient to learn task-optimal representations of the images (and sufficient representations of captions) and that it only learns the minimal easy-to-detect features that are needed to minimize the contrastive objective.

% !TEX root = ../main.tex

\section{Reducing Shortcut Learning}
\label{sec:reducing_shortcut_solution}

In the earlier section, we have demonstrated that contrastive loss mainly relies on the minimal, easy-to-detect features shared among image-caption pairs while suppressing remaining task-relevant information.
In this section, we describe two methods that help to reduce shortcut learning for contrastive learning on our \ac{SVL} framework: \Acl{LTD}~\citep{bleeker2023reducing} and  \acl{IFM}~\citep{robinson2021can}. 

\subsection{Latent Target Decoding}  

\Acf{LTD}~\citep{bleeker2023reducing} is a method to reduce predictive feature suppression (i.e., shortcut learning) for resource-constrained contrastive image-caption matching. 
The contrastive objective (i.e., InfoNCE)  is combined with an additional reconstruction loss, which reconstructs the input caption from the latent representation of the caption $\zcapt{i}{j}$. 
We refer to Appendix~\ref{app:ltd} for the mathematical definition of \ac{LTD}.
Instead of reconstructing the tokens of the input caption in an auto-regressive manner (i.e., auto-encoding), the caption is reconstructed non-auto-regressively, by mapping the caption representation into the latent space of a \acl{SBERT}~\citep{reimersers2019sentence, song2020mpnet} and minimizing the distance (i.e., reconstructing) between the reconstruction and the \acl{SBERT} representation of the caption $\capt{i}{j}$. 
The assumption is that the \emph{target} generated by the \acl{SBERT} model contains all task-relevant information in the caption. 
Hence, by correctly mapping the latent caption representation $\zcapt{i}{j}$ into the latent space of \acl{SBERT}, the caption encoder cannot suppress any task-relevant information or rely on shortcut solutions. 
\ac{LTD} is implemented both as a dual-loss objective (i.e., the contrastive loss and \ac{LTD} are added up) and as an optimization constraint while minimizing the InfoNCE loss, by implementing the loss as a Lagrange multiplier. For the mathematical definition of \ac{LTD}, we refer to Appendix~\ref{app:ltd}.

\header{Experimental Setup} We use the \ac{LTD} implementation and set-up similar to \citet{bleeker2023reducing}. 
We train both CLIP and VSE++ with \ac{LTD}, implemented as either dual loss or an optimization constraint.
When implementing \ac{LTD} as a constraint, we try $\eta \in \{0.01, 0.05, 0.1, 0.15, 0.2, 0.25, 0.3\}$ as bound values.
Similar to \cite{bleeker2023reducing}, when implementing \ac{LTD} as a dual loss, we use $\beta=1$ as balancing parameters.   
We train both with and without unique shortcuts.  
We do this to show 
\begin{enumerate*}[label=(\roman*)]
	\item what the performance improvement is compared to using only InfoNCE, and 
	\item to what degree \ac{LTD} prevents full collapse to shortcut features.
\end{enumerate*}
For each model and dataset, we take the training setup that results in the highest performance on the validation set.

\subsection{Implicit Feature Modification} 
\Acf{IFM}~\citep{robinson2021can} is a method, originally introduced in the context of representation learning for images,
that applies perturbations to logits used for guiding contrastive models. 
\ac{IFM} perpetuates features that the encoders use during a training step to discriminate between positive and negative samples. 
By doing so, \ac{IFM} alters the features that are currently used to solve the discrimination task, to avoid the InfoNCE loss to learn shortcut solutions.  
How much of the features are removed, is defined by a perturbation budget $\epsilon$.
\ac{IFM} is implemented as a dual loss in combination with the InfoNCE loss. 
For the mathematical definition of \ac{IFM}, we refer to Appendix~\ref{app:ifm}.

\header{Experimental Setup}
We apply a similar experimental set-up for \ac{IFM} as for \ac{LTD}.
We apply \ac{IFM} both to CLIP and to VSE++, both with and without unique shortcuts.
Similar to \citep{robinson2021can}, we try different perturbation budgets $\epsilon$, we try $\epsilon \in \{0.05, 0.1, 0.2, 0.5, 1\}$.
In line with the \ac{LTD} setup, we take the training setup that results in the highest performance on the validation set.

\subsection{Method Comparison}
Both \ac{LTD} and \ac{IFM} aim to mitigate shortcut learning through different approaches.
\ac{LTD} aims to learn all task-relevant information by reconstructing the input captions.
In contrast, \ac{IFM} perturbs the discriminative features in the latent space of the encoder and does not rely on a reconstruction objective.
Overall, both methods represent distinct strategies for improving the robustness and generalization capabilities of \ac{VL} representation learning.

In the following section, we present experimental results with \ac{LTD} and \ac{IFM}, providing insight into their effectiveness in mitigating shortcut learning.

% !TEX root = ../main.tex

\section{Experimental Results}
\label{sec:shorcut_results}

\subsection{Does Latent Target Decoding Reduce Shortcut Learning?}

In Table~\ref{tab:ltd} we summarize the effect of \ac{LTD} on reducing shortcut learning. 

For CLIP, for both the \ac{Flickr30k} and \ac{MS-COCO} dataset, we do not observe an increase in recall scores when fine-tuning with $\ltd{}$ compared to models that are only fine-tuned with $\infonce{}$. 
LTD has originally been proposed for resource-constrained \ac{VL} models. 
We argue that the additional features that LTD can extract are either already present in the pre-trained CLIP model, or not relevant for the evaluation task. 
However, when fine-tuning with $\ltd{}$ and in the presence of shortcuts in the training data, degradation in recall scores is significantly lower than when fine-tuned only with the $\infonce{}$. 
This shows that LTD can reduce the suppression of features in favor of the shortcut features when fine-tuning large-scale \ac{VL} models. 

% !TEX root = ../../main.tex

\begin{table*}[t!]
	\centering
	\caption{Mean and variance (over three training runs) recall@$k$ evaluation scores for the \ac{Flickr30k} and \ac{MS-COCO} datasets for image-to-text and text-to-image retrieval. 
		We train with two loss functions: $\infonce{}$ and $\ltd{}$. 
		We train either with (\checkmark) or without (\xmark) shortcuts. 
		For the model trained with $\ltd{}$, we provide the hyper-parameters of the best-performing model. $\eta$ indicates that the best-performing model uses LTD implemented as an optimization constraint with bound $\eta$. $\beta$ indicates that the best-performing model uses LTD implemented as a dual-loss with $\beta=1$.
		}
	\setlength{\tabcolsep}{1.5pt}
	\resizebox{\textwidth}{!}{%
	\begin{tabular}{@{\extracolsep{1pt}}l @{} c r@{$_\pm$}l r@{$_\pm$}l r@{$_\pm$}l r@{$_\pm$}l r@{$_\pm$}l r@{$_\pm$}l r@{$_\pm$}l}
		\toprule
		& & \multicolumn{6}{c}{\textit{i2t}} & \multicolumn{6}{c}{\textit{t2i}} & \multicolumn{2}{c}{ \ } \\
		\cmidrule(r){3-8} \cmidrule(r){9-14} \cmidrule{15-16}
		Loss & $\sct{}$ & \multicolumn{2}{c}{R@1}  & \multicolumn{2}{c}{R@5}  & \multicolumn{2}{c}{R@10} & \multicolumn{2}{c}{R@1}  & \multicolumn{2}{c}{R@5}  & \multicolumn{2}{c}{R@10}  & \multicolumn{2}{c}{rsum} \\ 
		\midrule
		\multicolumn{16}{c}{Flickr30k}\\
		\midrule
		& & \multicolumn{14}{c}{CLIP}                                                   \\
		\cmidrule{3-16}
		$\infonce{}$     &   \xmark &   ${86.9}$ & $_{0.1}$ & $\textbf{97.4}$ & $_{0.1}$ & $\textbf{99.0}$ & $_{0.0}$ & ${72.4}$ & $_{0.1}$ & $\textbf{92.1}$ & $_{0.0}$ & $\textbf{95.8}$ & $_{0.0}$ & ${543.5}$ & $_{1.1}$  \\
		$\ltd{}$, $\beta=1$ &  \xmark & ${86.5}$ & $_{0.6}$-- & ${97.1}$ & $_{0.0}\negative$ & ${98.5}$ & $_{0.0}\negative$& ${72.4}$ & $_{0.0}$-- & ${92.3}$ & $_{0.0}\negative$ & ${95.9}$ & $_{0.0}\negative$&  ${542.8}$ & $_{0.8}$--\\
		\cmidrule(r){3-16}
		$\infonce{}$  &  \checkmark  &  ${57.2}$ & $_{8.3}$ & ${84.0}$ & $_{4.8}$ & ${91.0}$ & $_{1.9}$ & ${44.9}$ & $_{4.5}$ & ${74.9}$ & $_{6.0}$ & ${84.2}$ & $_{2.5}$ & ${436.2}$ & $_{145.0}$  \\
		$\ltd{}$,  $\beta=1$ & \checkmark  & $\textbf{64.0}$ & $_{1.3}\positive$ & $\textbf{87.8}$ & $_{0.9}\positive$ & $\textbf{93.2}$ & $_{0.8}\positive$ & $\textbf{50.7}$ & $_{0.6}\positive$ & $\textbf{79.8}$ & $_{0.7}\positive$ & $\textbf{88.1}$ & $_{0.5}\positive$ & $\textbf{463.6}$ & $_{17.3}\positive$ \\
		\cmidrule{3-16}
		&  & \multicolumn{14}{c}{VSE++} \\
		\cmidrule{3-16}
		$\infonce{}$  & \xmark &  ${52.6}$ &$_{1.1}$ & ${79.8}$ & $_{0.1}$ & ${87.8}$ & $_{0.1}$ & ${39.5}$ & $_{0.3}$ & ${69.8}$ & $_{0.0}$ & ${79.4}$ & $_{0.1}$ & ${409.0}$ & $_{4.0}$\\
		$\ltd{}$, $\eta=0.2$ & \xmark & $\textbf{54.1}$ & $_{0.1}\positive$ & $\textbf{81.1}$ & $_{0.8}\positive$ & $\textbf{88.6}$ & $_{0.1}\positive$ & $\textbf{42.5}$ & $_{0.0}\positive$ & $\textbf{71.9}$ & $_{0.1}\positive$ & $\textbf{81.3}$ & $_{0.0}\positive$ & $\textbf{419.6}$ & $_{0.1}\positive$  \\
		\cmidrule(r){3-16}
		$\infonce{}$  &  \checkmark  &  ${0.1}$ & $_{0.0}$ & ${0.6}$ & $_{0.1}$ & ${1.1}$ & $_{0.1}$ & ${0.1}$ & $_{0.0}$ & ${0.5}$ & $_{0.0}$ & ${1.0}$ & $_{0.0}$ & ${3.4}$ & $_{0.6}$ \\
		$\ltd{}$, $\eta=0.05$ & \checkmark  &  $\textbf{24.7}$&$_{0.5}\positive$ & $\textbf{51.8}$ & $_{0.7}\positive$ & $\textbf{65.6}$ & $_{1.4}\positive$ & $\textbf{20.7}$ & $_{1.0}\positive$ & $\textbf{49.2}$ & $_{0.6}\positive$ & $\textbf{62.6}$ & $_{1.2}\positive$ & $\textbf{274.6}$ & $_{4.6}\positive$ \\
		\midrule
		\multicolumn{16}{c}{\ac{MS-COCO}}\\
		\midrule
		& & \multicolumn{14}{c}{CLIP}\\
		\cmidrule(r){3-16} 
		$\infonce{}$     &   \xmark & ${63.8}$ & $_{0.3}$ & ${86.1}$ & $_{0.2}$ & ${92.3}$ & $_{0.0}$ & ${46.3}$ & $_{0.3}$ & ${74.8}$ & $_{0.1}$ & ${84.1}$ & $_{0.2}$ & ${447.5}$ & $_{0.5}$  \\
		$\ltd{}$, $\beta=1$ &  \xmark & ${63.8}$ & $_{0.0}$-- & ${86.1}$ & $_{0.0}$-- & ${92.3}$ & $_{0.0}$-- & ${46.3}$ & $_{0.0}$-- & ${74.7}$ & $_{0.0}$-- & ${84.1}$ & $_{0.0}$-- & ${447.4}$ & $_{0.0}$-- \\
		\cmidrule(r){3-16}
		$\infonce{}$  &  \checkmark & ${13.6}$ & $_{0.9}$ & ${31.5}$ & $_{2.4}$& ${42.2}$ & $_{3.7}$ & ${\phantom{0}7.3}$ & $_{0.6}$ & ${22.1}$ & $_{1.0}$ & ${32.7}$ & $_{1.7}$ & ${149.4}$ & $_{32.7}$  \\
		$\ltd{}$, $\beta=1$ & \checkmark &  $\textbf{18.9}$ & $_{0.1}\positive$ & $\textbf{41.8}$ & $_{0.1}\positive$ & $\textbf{54.1}$ & $_{0.1}\positive$ & $\textbf{16.5}$ & $_{0.0}\positive$ & $\textbf{39.4}$ & $_{0.0}\positive$ & $\textbf{52.6}$ & $_{0.1}\positive$ & $\textbf{223.4}$ & $_{0.2}\positive$ \\
		\cmidrule{3-16} 
		&  & \multicolumn{14}{c}{VSE++} \\
		\cmidrule{3-16} 
		$\infonce{}$  & \xmark  & ${42.2}$ & $_{0.1}$ & ${72.7}$ & $_{0.1}$ & ${83.2}$ & $_{0.1}$ & ${30.9}$ & $_{0.0}$ & ${61.2}$ & $_{0.1}$& ${73.5}$ & $_{0.1}$ & ${363.8}$ & $_{2.3}$ \\
		$\ltd{}$, $\eta=0.1$ & \xmark & $\textbf{43.6}$ & $_{0.1}\positive$ & $\textbf{73.5}$ & $_{0.0}\positive$ & $\textbf{83.7}$ & $_{0.0}\positive$ & $\textbf{32.4}$ & $_{0.1}\positive$ & $\textbf{62.5}$ & $_{0.0}\positive$ & $\textbf{74.7}$ & $_{0.0}$ & $\textbf{370.5}$ & $_{0.1}\positive$   \\
		\cmidrule(r){3-16}
		$\infonce{}$    &  \checkmark & ${0.0}$ & $_{0.0}$ & ${0.1}$ & $_{0.0}$ & ${0.2}$ & $_{0.0}$ & ${\phantom{0}0.0}$ & $_{0.0}$ & ${\phantom{0}0.1}$ & $_{0.0}$ & ${\phantom{0}0.2}$ & $_{0.0}$ & ${\phantom{00}0.7}$ & $_{0.0}$ \\ 
		$\ltd{}$, $\eta=0.01$ & \checkmark & $\textbf{3.9}$ & $_{0.0}\positive$ & $\textbf{13.7}$ & $_{0.6}\positive$ & $\textbf{21.6}$ & $_{0.9}\positive$ & $\textbf{\phantom{0}3.1}$ & $_{0.2}\positive$ & $\textbf{11.0}$ & $_{1.6}\positive$ & $\textbf{18.1}$ & $_{3.0}\positive$ & $\textbf{\phantom{0}71.3}$ & $_{3.6}\positive$  \\
		\bottomrule
	\end{tabular}%
	}
\label{tab:ltd}
\end{table*}

Across the board, VSE++ models trained with the $\ltd{}$ loss consistently outperform the $\infonce{}$ loss, both for \ac{i2t} and \ac{t2i} retrieval and both when trained either with or without shortcuts, as indicated by higher recall@$k$ scores; this is consistent with the findings presented in \citep{bleeker2023reducing}).
For both the \ac{Flickr30k} and \ac{MS-COCO} dataset, when trained with the $\infonce{}$ and with shortcuts present in the training data, the model performance collapses to around 0 in the absence of shortcuts (as we have seen in Section~\ref{eval:results}). 
However, when we train with shortcuts in the training data and with $\ltd{}$, we observe, for both \ac{Flickr30k} and \ac{MS-COCO}, a significant gain in performance. 
The performance improvement is bigger for \ac{Flickr30k} than for \ac{MS-COCO}.
In general, the recall scores are still significantly lower than training without shortcuts, however, the models do not solely rely on the shortcuts anymore to minimize the contrastive loss and are able during evaluation (in the absence of shortcuts) to still correctly match image-caption pairs with each other.
The results in Table~\ref{tab:ltd} show that LTD is able, in the presence of shortcuts in the training data, to guide (small-scale) \ac{VL} models that are trained from scratch to not only learn the shortcut features that minimize the contrastive training objective but also represent other remaining task-relevant features in the data that are not extracted by $\infonce{}$.

\subsection{Does Implicit Feature Modification Reduce Shortcut Learning?}

% !TEX root = ../../main.tex

\begin{table*}[t!]
	\centering
	\caption{Mean and variance (over three training runs) recall@$k$ evaluation scores for the \ac{Flickr30k} and \ac{MS-COCO} datasets for image-to-text and text-to-image retrieval. 
		We train with two loss functions: $\infonce{}$ and $\ifm{}$. 
		We train either with (\checkmark) or without (\xmark) shortcuts. 
		For the model trained with $\ifm{}$, we provide the hyper-parameters of the best-performing model.}
	\setlength{\tabcolsep}{1.5pt}
	\resizebox{\textwidth}{!}{%
	\begin{tabular}{@{\extracolsep{1pt}}l @{} c r@{$_\pm$}l r@{$_\pm$}l r@{$_\pm$}l r@{$_\pm$}l r@{$_\pm$}l r@{$_\pm$}l r@{$_\pm$}l}
		\toprule
		& & \multicolumn{6}{c}{\textit{i2t}} & \multicolumn{6}{c}{\textit{t2i}} & \multicolumn{2}{c}{ \ } \\
		\cmidrule(r){3-8} \cmidrule(r){9-14} \cmidrule{15-16}
		Loss & $\sct{}$ & \multicolumn{2}{c}{R@1}  & \multicolumn{2}{c}{R@5}  & \multicolumn{2}{c}{R@10} & \multicolumn{2}{c}{R@1}  & \multicolumn{2}{c}{R@5}  & \multicolumn{2}{c}{R@10}  & \multicolumn{2}{c}{rsum} \\ 
		\midrule
		\multicolumn{16}{c}{\ac{Flickr30k}}\\
		\midrule
		& & \multicolumn{14}{c}{CLIP} \\
		\cmidrule(r){3-16} 
		$\infonce{}$ & \xmark & ${86.9}$ & $_{0.1}$ & ${97.4}$ & $_{0.0}$ & ${98.8}$ & $_{0.0}$ & ${72.8}$ & $_{0.2}$ & ${92.1}$ & $_{0.0}$ & ${95.6}$ & $_{0.0}$ & ${543.5}$ & $_{1.3}$\\
		$\ifm{}$, $\epsilon = 0.05$ &  \xmark  & $\textbf{87.4}$ & $_{0.1} \positive$ & ${97.4}$ & $_{0.2}$-- & ${99.1}$ & $_{0.0}$-- & ${73.2}$ & $_{0.0}$-- & ${92.2}$ & $_{0.0}$-- & ${95.6}$ & $_{0.0}$-- & ${544.9}$ & $_{0.2}$-- \\
		\cmidrule(r){3-16}
		$\infonce{}$  &  \checkmark & ${57.9}$ & $_{0.3}$ & ${84.6}$ & $_{0.8}$ & ${91.3}$ & $_{0.0}$ & ${43.9}$ & $_{2.2}$ & ${74.6}$ & $_{0.8}$ & ${84.4}$ & $_{0.4}$ & ${436.7}$ & $_{18.8}$ \\ 
		$\ifm{}$, $\epsilon = 0.1$ & \checkmark & $\textbf{73.8}$ & $_{0.8}\positive$ & $\textbf{91.5}$ & $_{0.5}\positive$ & $\textbf{95.6}$ & $_{0.0}\positive$ & $\textbf{58.9}$ & $_{0.1}\positive$ & $\textbf{84.4}$ & $_{0.1}\positive$ & $\textbf{91.1}$ & $_{0.2}\positive$ & $\textbf{495.2}$ & $_{5.7}\positive$ \\
		\cmidrule{3-16} 
		&  & \multicolumn{14}{c}{VSE++} \\
		\cmidrule(r){3-16} 
		$\infonce{}$  & \xmark & $\textbf{52.9}$ & $_{0.2}$ & $\textbf{80.5}$ & $_{0.1}$ & $\textbf{87.6}$ & $_{0.4}$ & $\textbf{40.5}$ & $_{0.1}$ & ${68.8}$ & $_{0.4}$ & $\textbf{78.9}$ & $_{0.3}$ & $\textbf{409.3}$ & $_{2.6}$ \\
		$\ifm{}$, $\epsilon=0.05$ & \xmark & ${52.4}$ & $_{0.2}\negative$& ${76.9}$ & $_{0.1}\negative$& ${85.3}$ & $_{0.0}\negative$ & ${39.1}$ & $_{0.0}\negative$& ${68.8}$-- & $_{0.1}$ & ${78.2}$ & $_{0.1}\negative$& ${400.7}$ & $_{0.0}\negative$\\
		\cmidrule(r){3-16}
		$\infonce{}$ & \checkmark  & ${0.1}$ & $_{0.0}$ & ${0.4}$ & $_{0.0}$ & ${0.8}$ & $_{0.0}$ & ${0.1}$ & $_{0.0}$ & ${0.4}$ & $_{0.0}$ & ${1.0}$ & $_{0.0}$ & ${2.9}$ & $_{0.0}$ \\
		$\ifm{}$, $\epsilon=0.05$ & \checkmark & ${0.0}$ & $_{0.0}$-- & ${0.6}$ & $_{0.1}$-- & ${0.9}$ & $_{0.2}$-- & ${0.1}$ & $_{0.0}$-- & ${0.5}$ & $_{0.0}$-- & ${1.0}$ & $_{0.0}$-- & ${3.2}$ & $_{0.8}$-- \\
		\midrule
		\multicolumn{16}{c}{\ac{MS-COCO}}\\
		\midrule
		&  & \multicolumn{14}{c}{CLIP} \\
		\cmidrule(r){3-16} 
		$\infonce{}$ & \xmark & $\textbf{63.5}$ & $_{0.1}$ & $\textbf{86.0}$ & $_{0.3}$ & $\textbf{92.2}$ & $_{0.0}$ & ${46.3}$ & $_{0.0}$ & ${74.7}$ & $_{0.0}$ & ${84.2}$ & $_{0.0}$ & ${446.9}$ & $_{0.9}$ \\
		$\ifm{}$, $\epsilon=0.05$ &  \xmark & ${63.0}$ & $_{0.1}\negative$ & ${86.6}$ & $_{0.1}\negative$ & ${92.6}$ & $_{0.2}\negative$ & $\textbf{47.2}$ & $_{0.0}\positive$ & $\textbf{75.6}$ & $_{0.0}\positive$ & $\textbf{84.5}$ & $_{0.0}\positive$ & $\textbf{449.5}$ & $_{1.7}\positive$ \\
		\cmidrule(r){3-16}
		$\infonce{}$  &  \checkmark & ${13.9}$ & $_{0.0}$ & ${32.7}$ & $_{0.1}$ & ${43.8}$ & $_{0.0}$ & ${8.8}$ & $_{0.0}$ & ${24.7}$ & $_{0.2}$& ${35.5}$ & $_{0.5}$ & ${159.4}$ & $_{3.4}$ \\
		$\ifm{}$, $\epsilon=0.05$ & \checkmark & $\textbf{23.4}$ & $_{1.5}\positive$ & $\textbf{46.5}$ & $_{2.7}\positive$ & $\textbf{58.2}$ & $_{2.5}\positive$ & $\textbf{17.1}$ & $_{0.3}\positive$ & $\textbf{38.9}$ & $_{0.9}\positive$ & $\textbf{51.3}$ & $_{1.0}\positive$ & $\textbf{235.5}$ & $_{43.8}\positive$ \\
		\cmidrule{3-16} 
		&  & \multicolumn{14}{c}{VSE++} \\
		\cmidrule(r){3-16}
		$\infonce{}$  & \xmark & $\textbf{41.7}$ & $_{0.3}$ & $\textbf{72.5}$ & $_{0.1}$ & $\textbf{83.1}$ & $_{0.1}$ & $\textbf{31.3}$ & $_{0.0}$ & ${61.1}$ & $_{0.0}$ & ${73.6}$ & $_{0.0}$ & $\textbf{363.4}$ & $_{0.4}$\\
		$\ifm{}$, $\epsilon=0.05$ & \xmark & ${40.2}$ & $_{0.0}\negative$  & ${70.8}$ & $_{0.1} \negative$ & $ {81.6}$ & $_{0.1}\negative$ & ${30.8}$ & $_{0.0}\negative$ & $\textbf{61.5}$ & $_{0.0}\positive$ & $\textbf{74.3}$ & $_{0.0}\positive$ & ${359.3}$ & $_{1.1}\negative$ \\
		\cmidrule(r){3-16}
		$\infonce{}$ &  \checkmark & ${0.0}$ & $_{0.0}$ & ${0.1}$ & $_{0.0}$ & ${0.2}$ & $_{0.0}$ & ${0.0}$ & $_{0.0}$ & ${0.1}$ & $_{0.0}$ & ${0.2}$ & $_{0.0}$ & ${0.6}$ & $_{0.0}$ \\
		$\ifm{}$, $\epsilon=0.05$& \checkmark & ${0.0}$ & $_{0.0}$-- & ${0.1}$ & $_{0.0}$-- & ${0.2}$ & $_{0.0}$ --& ${0.0}$ & $_{0.0}$-- & ${0.1}$ & $_{0.0}$-- & ${0.2}$ & $_{0.0}$-- & ${0.7}$ & $_{0.0}$--  \\
		\bottomrule
	\end{tabular}%
	}
	\label{tab:ifm}
\end{table*}

In Table~\ref{tab:ifm} we summarize the effect of \ac{IFM} on reducing shortcut solutions. 

For CLIP, we observe that $\ifm{}$, when training without shortcuts in the training data, only improves performance for the \ac{MS-COCO} dataset for the \ac{t2i} task. 
However, for both~\ac{Flickr30k} and~\ac{MS-COCO} we observe that, when training with unique shortcuts in the training data, fine-tuning with $\ifm{}$ results in a significantly lower performance drop in recall score than when fine-tuning with the $\infonce{}$.
Similar to LTD, the recall@$k$ scores are still lower than when trained without shortcuts in the training data.
We conclude that IFM is sufficient to reduce the suppression of features in favor of the shortcut features when fine-tuning a large-scale \ac{VL} model, as indicated by higher recall@$k$ scores when evaluating without shortcuts.

For VSE++, both for the \ac{Flickr30k} and \ac{MS-COCO} dataset, we do not observe that $\ifm{}$ outperforms the $\infonce{}$, both with and without shortcuts present in the training data. 
We even observe that $\ifm{}$, when training without shortcuts, results in a decrease in performance across all recall@$k$ metrics.
When training with $\ifm{}$ and with unique shortcuts in the training data, the evaluation performance still collapses to around 0.
The results in Table~\ref{tab:ifm} show that IFM is not sufficient to prevent models trained from scratch from fully collapsing to the artificial shortcut solutions we introduce in this work (as opposed to LTD).

\subsection{Upshot}

In this section, we have evaluated two methods for reducing shortcut learning on our \ac{SVL} framework: \ac{LTD} and \ac{IFM}.
	\ac{LTD} proves effective in reducing shortcut learning for both CLIP and VSE++.
	\ac{IFM} demonstrates its efficacy solely during the fine-tuning of CLIP.
	These findings indicate that our \ac{SVL} framework is a challenging and interesting framework to study and evaluate shortcut learning for contrastive \ac{VL} models.
	Moreover, our results show that shortcut learning is only partially addressed by the evaluated methods since the evaluation results are not on par with the results on data lacking synthetic shortcuts.

% !TEX root = ../main.tex

\section{Related work}
\label{sec:related-work}

We discuss related work on multi-view representation learning, \acl{VL} learning, and shortcut learning. 

\header{Multi-view Representation Learning}  To learn the underlying semantics of the training data, a subgroup of representation learning methods involves training neural encoders that maximize the agreement between representations of the similar \textit{views}~\citep{oord2018representation, hjelm2019learning, chen2020simple, radford2021learning, bardes2022vicreg}. 
In general, for uni-modal representation learning, data augmentations are used to generate different views of the same data point.
One of the core assumptions in multi-view representation learning is that each view shares the same \emph{task-relevant information}~\citep{sridharan2008information, zhao2017multi, federici2020learning, tian2020contrastive, shwartz2023compress}.
However, the optimal view for contrastive \ac{SSL} (i.e., which information is shared among views/which data augmentation is used) is task-dependent~\citep{tian2020what, xiao2021what}.
Therefore, maximizing the \acf{MI} between representations of views (i.e., shared information) does not necessarily result in representations that generalize better to down-stream evaluation tasks, since the representations may contain too much additional noise that is irrelevant for the downstream task~\citep{tian2020what, tschannen2020on}.
An open problem in multi-view \ac{SSL} is to learn representations that contain all task-relevant information from views where each view contains distinct, task-relevant information~\citep{shwartz2023compress}, this is especially a problem in the multi-modal learning domain~\citep{zong2023self}.

\cite{chen2021intriguing} investigate multi-view representation learning for images using contrastive losses. 
They demonstrate that when multiple competing features exist that redundantly predict the match between two views, contrastive models tend to focus on learning the easy-to-represent features while suppressing other task-relevant information.
This results in contrastive losses mainly capturing the easy features, even if all task-relevant information is shared between the two views, suppressing the remaining relevant information.

Several optimization objectives have been introduced to either maximize the lower bound on the \ac{MI} between views and their latent representations~\citep{oord2018representation, bachman2019learning, hjelm2019learning, tian2020contrastive} or minimize the \ac{MI} between representations of views while keeping the task-relevant information~\citep{federici2020learning, lee2021compressive}.
To learn more task-relevant information that either might not be shared between views or that is compressed by a contrastive loss, several works proposed additional reconstruction objectives to maximize the \ac{MI} between the latent representation and input data~\citep{tsai2021self, wang2022rethinking, li2023addressing, bleeker2023reducing}.
\cite{liang2023factorized} introduce a multimodal contrastive objective that factorizes the representations into shared and unique information, while also removing task-irrelevant information by minimizing the upper bound on \ac{MI} between similar views.

\header{Vision-language Representation Learning}
The goal of \ac{VL} representation learning is to combine information from the visual and textual modalities into a joint representation or learn coordinated representations~\citep{baltrusaitis2019_multimodal, guo2019_deep}.
The representation learning approaches can be separated into several groups.

\emph{Contrastive methods} represent one prominent category of \ac{VL} representation methods. 
The approaches in this group are typically dual encoders.
Early methods in this category are trained from scratch; for instance, \citep{frome2013devise} proposed a \ac{VL} representation learning model that features a skip-gram language model and a visual object categorization component trained with hinge rank loss. 
Another subgroup of methods uses a \emph{dual-encoder} with a hinge-based triplet loss~\citep{kiros2014unifying, li2019visual, lee2018stacked}.
\cite{kiros2014unifying} use the loss for training a CNN-RNN dual encoder. 
\citet{li2019visual} leverage bottom-up attention and graph convolutional networks~\citep{kipf2017_semi} to learn the relationship between image regions. 
\cite{lee2018stacked} add stacked cross-attention to use both image regions and words as context.

More recently, contrastive approaches involve transformer-based dual-encoders trained with more data than the training data from the evaluation set(s).
ALBEF~\citep{li2021align} propose to contrastively align unimodal representations before fusion, while X-VLM~\citep{zeng2022multi} employs an additional cross-modal encoder to learn fine-grained \ac{VL} representations.
Florence~\citep{yuan2021florence} leverages various adaptation models for learning fine-grained object-level representations.
CLIP~\citep{radford2021learning}, a scaled-up dual-encoder, is pre-trained on the task of predicting which caption goes with which image.
ALIGN~\citep{jia2021scaling} uses a simple dual-encoder trained on over a billion image alt-text pairs.
FILIP~\citep{yao2022flip} is a transformer-based bi-encoder that features late multimodal interaction meant to capture fine-grained representations.
SLIP~\citep{mu2022slip} combines language supervision and image self-supervision to learn visual representations without labels.
DeCLIP~\citep{li2022supervision} proposes to improve the efficiency of CLIP pretraining using intra-modality self-supervision, cross-modal multi-view supervision, and nearest neighbor supervision.

Another line of work includes learning \ac{VL} representations using models that are inspired by BERT~\citep{devlin2019_bert}.
ViLBERT~\citep{lu2019vilbert} and LXMERT~\citep{tan2019lxmert} expand upon BERT by introducing a two-stream architecture, where two transformers are applied to images and text independently, which is fused by a third transformer in a later stage. 
B2T2~\citep{alberti2019fusion}, VisualBERT~\citep{li2019visualbert}, Unicoder-VL~\citep{li2019unicoder}, VL-BERT~\citep{su2019vl}, and UNITER~\citep{chen2020uniter} propose a single-stream architecture, where a single transformer is applied to both images and text.
Oscar~\citep{li2020oscar} uses caption object tags as anchor points that are fed to the transformer alongside region features.
BEIT-3~\citep{wang2022image} adapt multiway transformers trained using cross-entropy loss~\citep{bao2022vlmo}.

Another category of methods for learning \ac{VL} representations are generative methods, that imply learning \ac{VL} representation by generating new instances of one modality conditioned on the other modality. For instance, BLIP~\citep{li2022blip} bootstraps captions by generating synthetic captions and filtering out the noisy ones; BLIP-2~\citep{li2023blip} bootstraps \ac{VL} representation learning and, subsequently, vision-to-language generative learning.
On the other hand, \citet{tschannen2023image} propose to pretrain a encoder-decoder architecture via the image captioning task.

\header{Shortcut Learning} \cite{geirhos2020shortcut} define shortcuts in deep neural networks as ``decision rules that perform well on standard benchmarks but fail to transfer to more challenging testing conditions, such as real-world scenarios.''
In the context of deep learning, a shortcut solution can also be seen as a discrepancy between the features that a model has learned during training and the intended features that a model should learn to perform well during evaluation. 
For example, shortcuts might be features that minimize the training objective but are much easier to detect than the intended features that are relevant to the evaluation task.
Shortcut learning can be caused by biases in the dataset or inductive biases in either the network architecture or training objective.

\cite{hermann2020what} design a dataset with multiple predictive features, where each feature can be used as a label for an image classification task.
The authors show that in the presence of multiple features that each redundantly predicts the target label, the deep neural model chooses to represent only one of the predictive features that are the easiest to detect, i.e., the model favors features that are easy to detect over features that are harder to discriminate.
Next to that, they show that features that are not needed for a classification task, are in general suppressed by the model instead of captured in the learned latent representations.

\cite{robinson2021can} show that contrastive losses can have multiple local minima, where different local minima can be achieved by suppressing features from the input data (i.e., the model learns a shortcut by not learning all task-relevant features). 
To mitigate the shortcut learning problem, \cite{robinson2021can} propose \acl{IFM}, a method that perpetuates the features of positive and negative samples during training to encourage the model to capture different features than the model currently relies on. 

\cite{scimeca2022which} design an experimental set-up with multiple shortcut cues in the training data, where each shortcut is equally valid w.r.t. predicting the correct target label. The goal of the experimental setup is to investigate which cues are preferred to others when learning a classification task.  

\Acf{LTD} is a method to reduce predictive feature suppression (i.e.,  shortcuts) for resource-constrained contrastive \ac{ICR} by reconstructing the input caption in a non-auto-regressive manner.
\cite{bleeker2023reducing} argue that most of the task-relevant information for the \ac{ICR} task is captured by the text modality.
Hence, the focus is on the reconstruction of the text modality instead of the image modality.
\cite{bleeker2023reducing} add a decoder to the learning algorithm, to reconstruct the input caption.
Instead of reconstructing the input tokens, the input caption is reconstructed in a non-autoregressive manner in the latent space of a \acl{SBERT}~\citep{reimersers2019sentence, song2020mpnet} model.
\ac{LTD} can be implemented as an optimization constraint and as a dual-loss.
\cite{li2023addressing} show that contrastive losses are prone to feature suppression. 
They introduce predictive contrastive learning (PCL), which combines contrastive learning with a decoder to reconstruct the input data from the latent representations to prevent shortcut learning.

\cite{adnan2022monitoring}  measure the \ac{MI} between the latent representation and the input as a domain agnostic metric to find where (and when) in training a network relies on shortcuts in the input data. 
Their main finding is that, in the presence of shortcuts, the \ac{MI} between the input data and the latent representation of the data is lower than without shortcuts in the input data. 
Hence, the latent representation captures less information of the input data in the presence of shortcuts and mainly relies on shortcuts to predict the target.

\header{Our Focus}
In this work, we focus on the problem of shortcut learning for \ac{VL} in the context of multi-view \ac{VL} representation learning with multiple captions per image. 
In contrast with previous (uni-modal) work on multi-view learning, we consider different captions matching to the same image as different \textit{views}.
We examine the problem by introducing a framework of synthetic shortcuts designed for \ac{VL} representation learning, which allows us to investigate the problem in a controlled way.
For our experiments, we select two prevalent \ac{VL} models that are solely optimized with the InfoNCE loss: 
CLIP, a large-scale pre-trained model, and VSE++, a model trained from scratch.
We select models that are solely optimized with a contrastive loss, to prevent measuring the effect of other optimization objectives on the shortcut learning problem. 

% !TEX root = ../main.tex

\section{Conclusion}
\label{sec:conclusion}

In this work, we focus on the shortcut learning problem of contrastive learning in the context of \acf{VL} representation learning with multiple captions per image.
We have proposed \acf{SVL}: a training and evaluation framework to examine the problem of shortcut learning in a controlled way. 
The key component of this framework is synthetic shortcuts that we add to image-text data. 
Synthetic shortcuts represent additional, easily identifiable information that is shared between images and captions.
We fine-tune CLIP and train a VSE++ model from scratch using our training framework to evaluate how prone contrastive \ac{VL} models are to shortcut learning.
Next, we have evaluated how shortcut learning can be partially mitigated using \acl{LTD} and \acl{IFM}.

\header{Main Findings} We have conducted experiments on two distinct \ac{VL} models, CLIP and VSE++, and have evaluated the performance on \ac{Flickr30k} and \ac{MS-COCO}.
We have found that when training with unique shortcuts, CLIP suppresses pre-trained features in favor of the shortcuts.
	VSE++ only learns to represent the shortcuts, when using unique shortcuts, showing that none of the remaining task-relevant (both shared and unique) information is captured by the encoders when training a model from scratch.
	When using \textit{$n$ bits of shortcuts}, we have shown that the more bits we use, the more the contrastive \ac{VL} models rely on the synthetic shortcuts.
	Our results demonstrate that contrastive \ac{VL} methods tend to depend on easy-to-learn discriminatory features shared among images and all matching captions while suppressing the remaining task-relevant information. 
	Next, we have evaluated two methods for reducing shortcut learning on our framework of synthetic shortcuts for image-caption datasets.
	Both methods partially mitigate shortcut learning when training and evaluating with our shortcut learning framework.
	These findings show that our framework is a challenging framework to study and evaluate shortcut learning for contrastive \ac{VL} and underline the complexity of our framework in studying and evaluating shortcut learning within the context of contrastive \ac{VL} representation learning.

\header{Implications} The implications of our findings are twofold.
First, we examine the limitations of contrastive optimization objectives for \ac{VL} representation learning, demonstrating that they predominantly capture features that are easily discriminable but may not necessarily constitute task-optimal representations.
Second, our work contributes a novel framework for investigating shortcut learning problem in the context of \ac{VL} representation learning with multiple captions per image, providing insights into the extent to which models rely on shortcuts when they are available and how existing shortcut reduction methods are capable of reducing shortcut learning when training with our framework.

\header{Limitations} Some of the limitations of our work are related to the fact that we focused on two specific models, one optimization objective (InfoNCE), and two datasets, and the generalizability of our findings to other \ac{VL} models, optimization objectives, and datasets warrants further exploration. 
Additionally, the synthetic shortcuts introduced in this work are not dependent on image-caption pairs. 
Our training and evaluation setup shows that, in the presence of shortcuts in the training data, contrastive \ac{VL} models mainly rely on the easy-to-detect shortcut features, which indicates that the InfoNCE loss cannot learn tasks-optimal representations for \ac{VL} tasks when multiple captions are used for training.
However, it remains unclear to what degree the unique information of the captions is captured by the contrastive loss \ac{VL} models.

\header{Future Work} 
We suggest working on the development of optimization objectives that specifically address the shortcut learning problem for \ac{VL} training with multiple captions per image.
We also suggest extending our synthetic shortcuts for image-caption datasets to a framework with unique shortcut information per caption. 
By having unique shortcut information per caption, it becomes possible to measure how much of the shared/caption-specific shortcut information is captured by encoder models.
Another future direction includes investigating alternative training strategies or loss functions to further mitigate shortcut learning problems. 
Another promising direction for future work includes the improvement of existing methods or the exploration of novel techniques that address the limitations of existing shortcut reduction methods, potentially through the combination of multiple approaches.
Extending the SVL framework to better capture nuances and complexities of natural data is another important  direction that would facilitate a more comprehensive understanding of the implications of shortcut learning in real-world scenarios and datasets.

% !TEX root = ../main.tex

\section{Broader Impact}
This paper motivates and introduces a framework for investigating the problem of shortcut learning for contrastive \ac{VL} representation learning with multiple captions per image in a controlled way.
It also examines how two shortcut learning reduction methods perform on the proposed framework.
Overall, the framework provides a tool for analyzing and understanding the problem of shortcut learning in the context of contrastive \ac{VL} representation learning; it can be used in various settings that require deeper insight into the quality of learned \ac{VL} representations.

We should be aware that the reliance on shortcuts in \acp{VLM} poses ethical concerns with potential real-world implications. 
Models that learn shortcuts may overlook nuanced details in images and text, leading to biased or inaccurate outcomes. 
Furthermore, the transparency and explainability of \acp{VLM} are crucial considerations. 
Models that rely on shortcuts may make decisions based on features that are not easily interpretable or explainable to users. 
This lack of transparency can diminish trust in AI systems.

\section*{Acknowledgements}
We thank Marco Federici and Mathijs Henquet for the discussions on mutual information and feedback on the draft. Additionally, we thank Shashank Gupta and Panagiotis Efstratiadis for helpful feedback. 

This research was supported by the Nationale Politie, Ahold Delhaize, project IDEAS with project number VI.Vidi.223.166 of the NWO Talent Programme, which is (partly) financed by the Dutch Research Council (NWO), the Hybrid Intelligence Center, a 10-year program funded by the Dutch Ministry of Education, Culture and Science through the Netherlands Organisation for Scientific Research, \url{https://hybrid-intelligence-centre.nl}, project LESSEN with project number NWA.1389.20.183 of the research program NWA ORC 2020/21, which is (partly) financed by the Dutch Research Council (NWO), project ROBUST with project number KICH3.LTP.20.006, which is (partly) financed by the Dutch Research Council (NWO), DPG Media, RTL, and the Dutch Ministry of Economic Affairs and Climate Policy (EZK) under the program LTP KIC 2020-2023, and the FINDHR (Fairness and Intersectional Non-Discrimination in Human Recommendation) project that received funding from the European Union’s Horizon Europe research and innovation program under grant agreement No 101070212.
All content represents the opinion of the authors, which is not necessarily shared or endorsed by their respective employers and/or sponsors.

\bibliography{references}  
\bibliographystyle{tmlr}

\appendix
% !TEX root = ../../main.tex

% !TEX root = ../../main.tex

\clearpage
\section{Notation}
\label{appendix:notation}

\begin{table}[htb!]
\centering
	\setlength{\tabcolsep}{1pt}
	\setlength\extrarowheight{-1.25pt}
\caption{Notation used in the paper.}
\label{tab:notation}
\begin{tabular}{p{.14\textwidth} p{.86\textwidth}} 
\toprule
\textbf{Symbol} & \textbf{Description} \\ \midrule
$\infonce{}$ & InfoNCE loss \\ \cmidrule{2-2}
$\ltd{}$ & Loss that combines InfoNCE and \acf{LTD} \\ \cmidrule{2-2}
$\ifm{}$ & Loss that combines InfoNCE and \acf{IFM} \\ \cmidrule{2-2}
$\mathcal{L}_{recon}$  & Reconstruction loss \\ \cmidrule{2-2}
$\mathcal{D}$ & Dataset $\mathcal{D}$ that comprises $N$ image-caption tuples: $\mathcal{D} = \left\{\left(\img{i}, \{\capt{i}{j}\}_{j=1}^{k} \right) \right\}_{i=1}^{N}$; $i$-th image-caption tuple in the dataset $\mathcal{D}$ consist out of an image $\img{i}$ and $k$ associated captions $\{\capt{i}{j}\}_{j=1}^{k}$ 
\\ \cmidrule{2-2}
$\mathcal{B}$ & Batch of image-caption pairs \\ \cmidrule{2-2}
$\img{}$ & Image \\ \cmidrule{2-2}
$\capt{}{}$ & Caption \\ \cmidrule{2-2}
$\zimg{}{}$ & Latent representation of image $\img{}$ \\ \cmidrule{2-2}
$\zcapt{}{}$ & Latent representation of caption $\capt{}{}$ \\ \cmidrule{2-2}
$\zcaptsuff{}{}$ & Latent representation of caption $\capt{}{}$ that is sufficient for image $\img{}$ \\ \cmidrule{2-2}
$\zimgsuff{}$ & Latent representation of image $\img{}$ sufficient for caption $\capt{}{}$ 
\\ \cmidrule{2-2}
$\zminsuff{}$ & Latent representation of image $\img{}$ that is minimal sufficient for caption $\capt{}{}$ 
\\ \cmidrule{2-2}
$\zoptimal{}{K}$ & Latent representation of image $\img{}$ that is optimal for set of captions $K$ given  task $T$  \\ \cmidrule{2-2}
$R$ & Task-relevant information \\ \cmidrule{2-2}
$\neg R$ & Task-irrelevant information \\ \cmidrule{2-2}
$C$ & Task-relevant information specific for a caption $\capt{}{}$ \\ \cmidrule{2-2}
$\sct{}$ & Synthetic shortcut \\ \cmidrule{2-2}
$S$ & Original shared information \\ \cmidrule{2-2}
$S^{+}$ & Shared information that includes synthetic shortcut  \\ \cmidrule{2-2}
$R^{+}$ & Task-relevant information that contains synthetic shortcut \\ \cmidrule{2-2}
$f_{\theta}(\cdot)$ & Image encoder parametrised by $\theta$; takes image $\img{}$ as input and returns its latent representation $\zimg{}{}$: $\zimg{}{} := f_{\theta}(\img{})$  
\\ \cmidrule{2-2}
$g_{\phi}(\cdot)$ & Caption encoder parametrised by $\phi$; takes caption $\capt{}{}$ as input and returns its latent representation $\zcapt{}{}$: $\zcapt{}{} := g_{\phi}(\zcapt{}{})$ 
\\  \cmidrule{2-2}
$\tau$ & Temperature paramater of $\infonce{}$ \\  \cmidrule{2-2}
$\epsilon$ & Perturbation budget for $\mathcal{L}_{\text{IFM}}$\\  \cmidrule{2-2}
$\eta$ & Reconstruction bound  for $\mathcal{L}_{\text{LTD}}$ \\
\bottomrule
\end{tabular}
\end{table}

% !TEX root = ../../main.tex

\section{Problem Definition and Assumptions}
\label{appendix:problem}

In this work, we solely focus on contrastive \ac{VL} representation learning. 
We work in a setting where we investigate the problem by fine-tuning a large pre-trained foundation model \citep[CLIP,][]{radford2021learning} and training a resource-constrained image-text method from scratch \citep[VSE++,][]{faghri2018improving}.
We train and evaluate using two benchmark datasets where multiple captions per image are available: \acl{Flickr30k} \citep{young2014image} and \acl{MS-COCO} \citep{lin2014microsoft}. 
Both datasets come with 5 captions per image.
We work in a dual-encoder setup, i.e., we have a separate image and caption encoder, which do not share parameters.

\subsection{Evaluation Task}
\label{appendix:task}

The \acf{ICR} evaluation task, consists of two sub-tasks: \acf{i2t} and \acf{t2i} retrieval.
In \ac{ICR}, either an image or a caption is used as  a query and the goal is to rank a set of candidates in the other modality.
In this work, we follow the standard \ac{ICR} evaluation procedure \citep[see, e.g.,][]{faghri2018improving, lee2018stacked, li2019visual}.
The evaluation metric for the \ac{ICR} task is Recall$@k$, with $k=\{1, 5, 10\}$.
For \ac{t2i} retrieval, there is one matching/positive image per query caption (when using the \ac{Flickr30k} or \ac{MS-COCO} or dataset). 
Hence, the Recall$@k$ metric represents how often the correct image is present in the top-$k$ of the ranking. 
For \ac{i2t} retrieval, however, there are 5 matching captions per image. 
Therefore, only the highest-ranked correct caption is taken into account when measuring the Recall$@k$ (i.e., in the highest-ranked caption present in the top $k$).
Standard practice to select the best model checkpoint during training is to use the \emph{recall sum} (rsum) as a validation metric. 
The recall sum is the sum of recall at 1, 5, and 10, for both \ac{i2t} and \ac{t2i}.
Therefore, the maximum value of the recall sum is 600. 

\subsection{Assumptions}
\label{app:assumptions}

Throughout this work, we rely on several assumptions about the problem definition. 
Our assumptions are defined at the level of an image-text tuple. Following Section~\ref{sec:background}, we formalize the assumptions on the case where one image is associated with two captions: $\left(\img{}, \{ \capt{}{A}, \capt{}{B} \} \right)$.

\begin{assumption}
\label{asm:shared-unique}
	Each caption in the tuple contain information that is distinct from the other captions in the tuple and all captions and image in the tuple contain shared and unique information:
	\begin{align*}
		&I(\img{}; \capt{}{A}; \capt{}{B}) > 0 \\
		&I(\img{}; \capt{}{A} \mid \capt{}{B}) > 0,\  I(\img{}; \capt{}{B} \mid \capt{}{A}) > 0\ \text{and}\  I(\capt{}{A}; \capt{}{B} \mid \img{}) > 0 \\
		&H(\img{} \mid \capt{}{A}, \capt{}{B}) > 0,\ H(\capt{}{A} \mid \img{}, \capt{}{B}) > 0\ \text{and}\ H(\capt{}{B} \mid \img{}, \capt{}{A}) > 0.
	\end{align*}
\end{assumption}

\begin{assumption}
\label{asm:task-relevant-info}
	Task-relevant information $R$ is the combination of all the information shared between an image and each caption in the tuple:
	\begin{align*}
		R = I(\img{}; \capt{}{A} \mid \capt{}{B})  + I(\img{}; \capt{}{B} \mid \capt{}{A}) + I(\img{}; \capt{}{A}; \capt{}{B}).
	\end{align*}
\end{assumption}

% !TEX root = ../../main.tex

\section{Analysis of Contrastive Learning for Multiple Captions per Image}
\label{app:analysis-of-contrastive}

\begin{thm}{1}[Suboptimality of contrastive learning with multiple captions per image]
	\label{thm:suboptimality-app}	
	Given an image $\img{}{}$, a set of matching captions $\mathcal{C} = \{ \capt{}{A}, \capt{}{B} \}$, and a contrastive learning loss function $\infonce{}$ that optimizes for task $T$, 
	image representations learned during contrastive learning will be minimal sufficient and will never be task-optimal image representations.
	More formally, assume that:
	\begin{itemize}
		\item[$(H_1)$] $\forall i, j \in \{A, B\}\ \text{such that}\ i \neq j,\ I(\zminsuff{i} ; \capt{}{i}) = I(\img{}{} ; \capt{}{i} \mid \capt{}{j}) + I(\img{}{} ; \capt{}{i} ; \capt{}{j})$.
		\item[$(H_2)$] $\exists i, j \in \{A, B\}\ \text{with}\ i \neq j\ \text{such that}\ I(\img{}{} ; \capt{}{i} \mid \capt{}{j}) > 0$.
	\end{itemize}
	Then the following holds:
	\begin{itemize}
		\item[$(T_2)$] $\exists \ i \in \{A, B\}\ \text{such that}\ I(\zoptimal{}{\mathcal{C}} ; \capt{}{A}\capt{}{B}) > I(\zminsuff{i} ; \capt{}{i})$.
	\end{itemize} 
\end{thm}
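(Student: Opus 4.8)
The plan is to chain together the information-theoretic identities for the quantities involved. Under hypothesis $(H_1)$, the minimally sufficient representation $\zminsuff{i}$ satisfies $I(\zminsuff{i}; \capt{}{i}) = I(\img{}{}; \capt{}{i} \mid \capt{}{j}) + I(\img{}{}; \capt{}{i}; \capt{}{j})$, i.e., it captures exactly the $C_i$-specific information plus the shared information $S$, but nothing of $C_j$. On the other hand, the task-optimal representation $\zoptimal{}{\mathcal{C}}$ is, by Definition~\ref{def:opt_z}, sufficient for both captions, so $I(\zoptimal{}{\mathcal{C}}; \capt{}{}) = I(\img{}{}; \capt{}{})$ for each $\capt{}{} \in \mathcal{C}$. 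The key step is to express $I(\zoptimal{}{\mathcal{C}}; \capt{}{A}\capt{}{B})$ — the information the optimal representation shares with the \emph{pair} of captions — and show it strictly dominates $I(\zminsuff{i}; \capt{}{i})$.

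First I would write down the chain-rule decomposition $I(\zoptimal{}{\mathcal{C}}; \capt{}{A}\capt{}{B}) = I(\zoptimal{}{\mathcal{C}}; \capt{}{i}) + I(\zoptimal{}{\mathcal{C}}; \capt{}{j} \mid \capt{}{i})$ for a choice of $i \neq j$. The first term equals $I(\img{}{}; \capt{}{i})$ by task-optimality. I would then argue that, since $\zoptimal{}{\mathcal{C}}$ is a function of $\img{}{}$ and is sufficient for $\capt{}{j}$, the conditional term $I(\zoptimal{}{\mathcal{C}}; \capt{}{j} \mid \capt{}{i})$ equals $I(\img{}{}; \capt{}{j} \mid \capt{}{i})$ — intuitively, conditioning on $\capt{}{i}$ cannot destroy the $C_j$-specific information that the optimal representation retains, because that information is by definition not in $\capt{}{i}$. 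Combining, $I(\zoptimal{}{\mathcal{C}}; \capt{}{A}\capt{}{B}) = I(\img{}{}; \capt{}{i}) + I(\img{}{}; \capt{}{j} \mid \capt{}{i}) = I(\img{}{}; \capt{}{A}\capt{}{B})$. Now expand $I(\img{}{}; \capt{}{i}) = I(\img{}{}; \capt{}{i} \mid \capt{}{j}) + I(\img{}{}; \capt{}{i}; \capt{}{j})$, which by $(H_1)$ is exactly $I(\zminsuff{i}; \capt{}{i})$. Hence $I(\zoptimal{}{\mathcal{C}}; \capt{}{A}\capt{}{B}) = I(\zminsuff{i}; \capt{}{i}) + I(\img{}{}; \capt{}{j} \mid \capt{}{i})$.

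To conclude $(T_2)$, I need the extra term $I(\img{}{}; \capt{}{j} \mid \capt{}{i})$ to be strictly positive for \emph{some} ordering of the pair. This is precisely what hypothesis $(H_2)$ supplies: there exist $i,j$ with $i \neq j$ such that $I(\img{}{}; \capt{}{i} \mid \capt{}{j}) > 0$; relabelling, we get the positive increment for the complementary index, and the strict inequality $I(\zoptimal{}{\mathcal{C}}; \capt{}{A}\capt{}{B}) > I(\zminsuff{i}; \capt{}{i})$ follows for that $i$. I would then note that $(H_2)$ itself is a consequence of Assumption~\ref{asm:shared-unique}, so the hypothesis is not vacuous.

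The main obstacle I anticipate is rigorously justifying the step $I(\zoptimal{}{\mathcal{C}}; \capt{}{j} \mid \capt{}{i}) = I(\img{}{}; \capt{}{j} \mid \capt{}{i})$: this is not immediate from the unconditional sufficiency in Definition~\ref{def:opt_z}, since sufficiency $I(\zoptimal{}{\mathcal{C}}; \capt{}{j}) = I(\img{}{}; \capt{}{j})$ does not a priori control the conditional mutual information after conditioning on $\capt{}{i}$. I would handle this by using that $\zoptimal{}{\mathcal{C}}$ is a deterministic function of $\img{}{}$ (so $I(\zoptimal{}{\mathcal{C}}; \cdot) \leq I(\img{}{}; \cdot)$ always, by data processing), combined with the sufficiency for each caption, and a Markov-chain / nonnegativity argument on the interaction information to pin down the conditional term; alternatively, one can phrase the whole argument in terms of the Venn-diagram regions of Figure~\ref{fig:venn} and simply track which atoms each representation contains. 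I expect the appendix proof to take the latter, more pictorial route, and the formal step above is where care is genuinely needed.
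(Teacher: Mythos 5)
Your proposal is correct and follows essentially the same route as the paper's proof: both reduce to writing $I(\zoptimal{}{\mathcal{C}};\capt{}{A}\capt{}{B}) = C_A + C_B + S$ and $I(\zminsuff{i};\capt{}{i}) = C_i + S$ (the latter is exactly $(H_1)$), and then invoke $(H_2)$ to make the leftover term $C_j$ strictly positive for the complementary index. The one step you flag as delicate --- deriving $I(\zoptimal{}{\mathcal{C}};\capt{}{A}\capt{}{B}) = I(\img{}{};\capt{}{A}\capt{}{B})$ from the per-caption sufficiency in Definition~\ref{def:opt_z} via the chain rule --- is not actually carried out in the paper either; the appendix simply asserts that identity ``following Eq.~\ref{eq:task_relevant},'' so your version is, if anything, more explicit about where the genuine rigor is needed.
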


\begin{proof} 
Following Eq.~\ref{eq:task_relevant} we define a task-optimal representation of an image $\img{}{}$ w.r.t.\ all matching captions in $\mathcal{C}$ as:
	\begin{align*}
		I(\zoptimal{}{\mathcal{C}} ; \capt{}{A}\capt{}{B} )
		 =
		 \underbrace{ I( \img{}{} ; \capt{}{A} \mid \capt{}{B}) }_{\textit{$C_A$}}
		+
		\underbrace{I( \img{}{} ; \capt{}{B} \mid \capt{}{A})}_{\textit{$C_B$}}
		+
		\underbrace{I( \img{}{} ; \capt{}{A} ;  \capt{}{B})}_{\textit{$S$}}.
	\end{align*}

Furthermore, following Definition~\ref{def:zimgminsuff}, we define minimal sufficient representations of image $\img{}{}$ w.r.t.\ each matching caption in $\mathcal{C}$ as a combination of caption-specific and shared information:
	\begin{align*}
		I(\zminsuff{A} ; \capt{}{A})
		 & =
		 \underbrace{I(\img{}{} ; \capt{}{A} \mid \capt{}{B})}_{\textit{$C_A$}}
		+
		\underbrace{I(\img{}{} ; \capt{}{A} ; \capt{}{B})}_{\textit{$S$}} 
		\\
		I(\zminsuff{B} ; \capt{}{B})
		& =
		 \underbrace{I(\img{}{} ; \capt{}{B} \mid \capt{}{A})}_{\textit{$C_B$}}
		+
		\underbrace{ I(\img{}{} ; \capt{}{A} ; \capt{}{B}) }_{\textit{$S$}}.
	\end{align*}
Following assumption $H_2$, for at least one caption $\capt{}{} \in \mathcal{C}$ associated with the image $\img{}{}$, caption-specific information is positive. Therefore, we consider two cases:
\begin{itemize}
\item If caption-specific information of $\capt{}{A}$ is positive, that is, if $I( \img{}{} ; \capt{}{A} \mid \capt{}{B}) > 0$:
	\begin{align*}
	\underbrace{I( \img{}{} ; \capt{}{A} \mid \capt{}{B}) + I( \img{}{} ; \capt{}{B} \mid \capt{}{A}) + I( \img{}{} ; \capt{}{A} ;  \capt{}{B})}_{\textit{$(\zoptimal{}{\mathcal{C}} ; \capt{}{A}\capt{}{B} )$}}
	>
	\underbrace{ I(\img{}{} ; \capt{}{B} \mid \capt{}{A}) + I(\img{}{} ; \capt{}{A} ; \capt{}{B}) }_{\textit{$I(\zminsuff{B} ; \capt{}{B})$} } \Rightarrow \\
	\Rightarrow
	I( \zoptimal{}{\mathcal{C}} ; \capt{}{A}\capt{}{B} ) > I(\zminsuff{B} ; \capt{}{B}).
	\end{align*}

\item Similarly, if caption-specific information of $\capt{}{B}$ is positive, that is, if $I( \img{}{} ; \capt{}{B} \mid \capt{}{A}) > 0$:

	\begin{align*}
	\underbrace{I( \img{}{} ; \capt{}{A} \mid \capt{}{B}) + I( \img{}{} ; \capt{}{B} \mid \capt{}{A}) + I( \img{}{} ; \capt{}{A} ;  \capt{}{B})}_{\textit{$(\zoptimal{}{\mathcal{C}} ; \capt{}{A}\capt{}{B} )$}}
	>
	\underbrace{ I(\img{}{} ; \capt{}{A} \mid \capt{}{B}) + I(\img{}{} ; \capt{}{A} ; \capt{}{B}) }_{\textit{$I(\zminsuff{A} ; \capt{}{A})$} } \Rightarrow \\
	\Rightarrow
	I( \zoptimal{}{\mathcal{C}} ; \capt{}{A}\capt{}{B} ) > I(\zminsuff{A} ; \capt{}{A}).
	\end{align*}
\end{itemize}

Therefore, we show that in a setup where a single image is associated with multiple captions, and given at least one caption contains caption-specific information, image representations learned contrastively w.r.t.\ associated captions would contain less information than task-optimal image representation: $\exists \ i \in \{A, B\}\ \text{such that}\ I(\zoptimal{}{\mathcal{C}} ; \capt{}{A}\capt{}{B}) > I(\zminsuff{i} ; \capt{}{i})$.
\end{proof}

% !TEX root = ../../main.tex

\section{Experimental Setup}
\label{app:experimental}

\subsection{Datasets}
\label{app:experimental-datasets}

\headernodot{Flickr30k} consists of \numprint{31000} images annotated with 5 matching captions~\citep{young2014image}. 

\headernodot{MS-COCO} consists of \numprint{123287} images, each image annotated with 5 matching captions~\citep{lin2014microsoft}.  
The original dataset was introduced for large-scale object recognition.

For both datasets, we use the training, validation, and test splits from~\citep{karpathy2015deep}.

\subsection{Models}
\label{app:experimental-models}

We use CLIP and VSE++. Both consist of an image and a text encoder that do not share parameters.

\headernodot{CLIP} is a large-scale image-text foundation model~\citep{radford2021learning}.
The model is pre-trained on a collection of \numprint{400} million image-text pairs collected from the Web. 
The encoders are pre-trained using a contrastive loss (InfoNCE) on image-text pairs.
The text encoder of consists of a 12-layer transformer model, described in~\citep{radford2019language}.
As for the image encoder, CLIP utilizes various model backbones, such as ResNet~\citep{he2016deep}  and Vision Transformer~\citep{dosovitskiy2021image}.
In this work, we use the ResNet-50  (‘RN50’) variant of the CLIP image encoder.\footnote{\url{https://github.com/openai/CLIP/}}
The CLIP encoders are trained to jointly understand images and text.
Therefore, the learned representations generalize to a wide range of different zero-shot (visual) evaluation tasks, such as classification, without task-specific fine-tuning, by using textual prompts. 

\headernodot{VSE++} is an image-caption encoder trained from scratch~\citep{faghri2018improving}.
The model features a triplet loss function with a margin parameter $\alpha=0.2$.
The text encoder is a one-layer gated recurrent unit (GRU)~\citep{cho2014_learning}.
The available image encoder configurations are ResNet-152~\citep{he2016deep} and VGG19~\citep{simonyan2015_very_deep}.
In this work, we use ResNet-152.

\subsection{Training}
\label{app:experimental-training}

\header{CLIP} To fine-tune CLIP, we follow~\citep{yuksekgonul2023when}.
All models are fine-tuned for 5 epochs.
We employ a cosine-annealing learning rate schedule, with a base learning rate of $2e-5$, and 100 steps of warm-up.
As an optimizer, we use AdamW~\citep{loshchilov2019decoupled} with a gradient clipping value of $2$.
For the InfoNCE loss, we use the logit-scale (i.e., temperature $\tau$) from the pre-trained CLIP model and fine-tune the logit-scale end-to-end along with the rest of the model parameters.

\header{VSE++} The model is trained for 30 epochs using a linear learning rate schedule with a base learning rate of $2e-4$. We use the Adam optimizer~\citep{kingma2014_adam} with a gradient clipping value of $2$. Instead of the triplet loss, we use the InfoNCE loss similar to~\cite{radford2021learning},

For both models, instead of selecting the best-performing model based on the validation set scores, we use the final checkpoint at the end of training.

\subsection{Shortcut Sampling}
\label{app:experimental-shortcutsampling}

Our goal is to add the shortcuts in a manner that preserves the original information of the images and captions. 
For the captions, we append the shortcut at the end of the captions.
In order to prevent a tokenizer from tokenizing the shortcut into a single token, we insert spaces between each number of the shortcut.
For the images, we place the numbers of the shortcuts at the top of the images, evenly spaced across the entire width of the images (to make sure the shortcut is evenly spaced across the feature map of the image).
We always use 6 digits to represent a shortcut.  
If a shortcut number contains fewer than 6 digits, we fill the remaining positions with zeros for padding.
For the MNIST images, we always sample a random image from the set of images representing the number that belongs to (also during evaluation), to prevent overfitting on specific MNIST images. 
In Figure~\ref{fig:shortcut_examples}, we provide four examples of image-caption pairs with randomly added shortcuts. 
The examples in Figure~\ref{fig:shortcut_examples} show 
\begin{enumerate*}[label=(\roman*)]
	\item how synthetic shortcuts are added to the image and the caption, and 
	\item that the shortcuts preserve the original (task-relevant) information of the images and captions.
\end{enumerate*}

\begin{figure}[t!]
	\centering
	\begin{subfigure}[b]{0.475\textwidth}
		\centering
		\includegraphics[width=\textwidth, height=\textwidth]{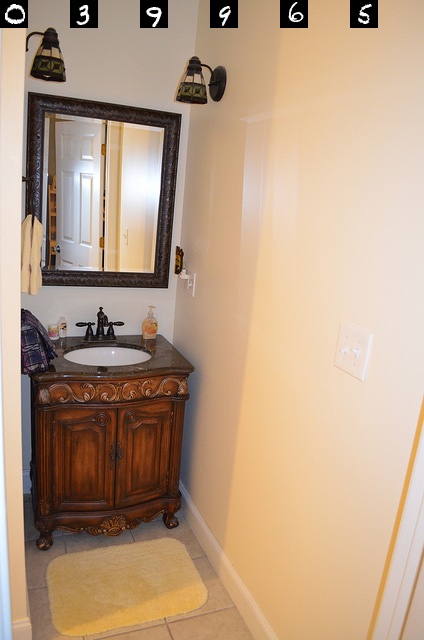}
		\caption{\textbf{Caption}: ``A bathroom sink with wood finish cabinets.  0 3 9 9 6 5.''}%
		\label{fig:shortcut1}
	\end{subfigure}
	\hfill
	\begin{subfigure}[b]{0.475\textwidth}  
		\centering 
		\includegraphics[width=\textwidth, height=\textwidth]{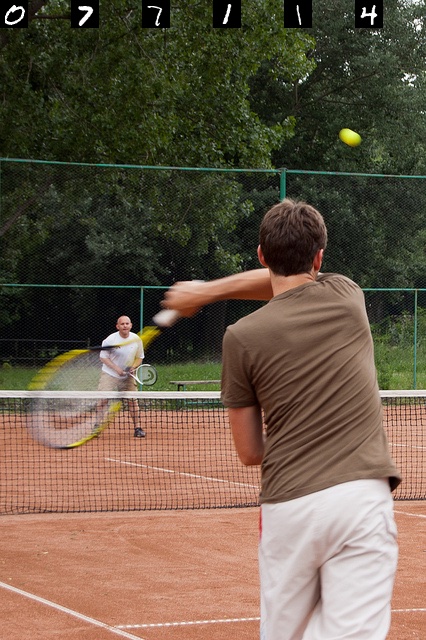}
		\caption{\textbf{Caption}: ``A guy in a brown shirt has just hit a tennis ball.  0 7 7 1 1 4.''}%  
		\label{fig:shortcut2}
	\end{subfigure}
	\vskip\baselineskip
	\begin{subfigure}[b]{0.475\textwidth}   
		\centering 
		\includegraphics[width=\textwidth, height=\textwidth]{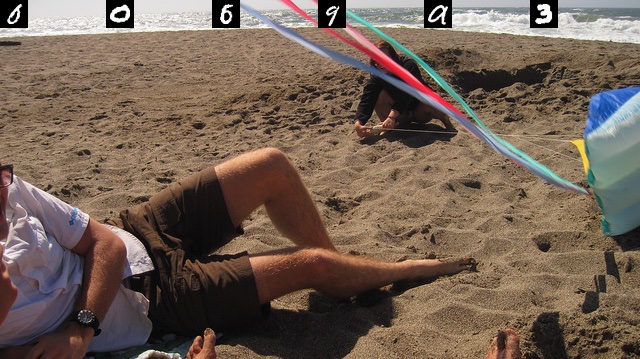}
		\caption{\textbf{Caption}: ``A man in shorts is lying on the beach. 0 0 6 9 9 3.''}% 
		\label{fig:shortcut3}
	\end{subfigure}
	\hfill
	\begin{subfigure}[b]{0.475\textwidth}   
		\centering 
		\includegraphics[width=\textwidth, height=\textwidth]{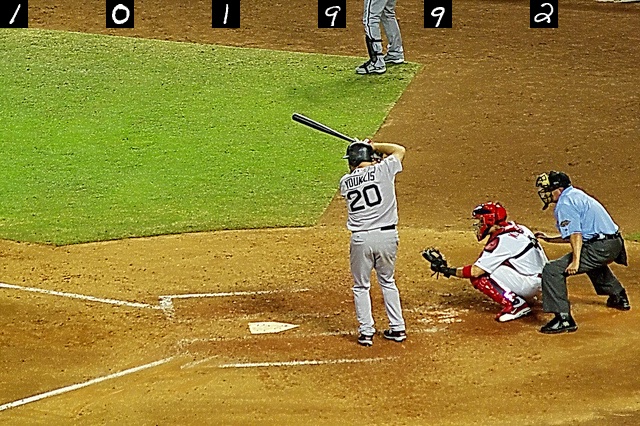}
		\caption{\textbf{Caption}: ``A player up to bat in a baseball game. 1 0 1 9 9 2.''}%
		\label{fig:shortcut4}
	\end{subfigure}
	\caption{Four random samples from the \acs{MS-COCO} dataset including shortcuts added on both the image and caption.
	}
	\label{fig:shortcut_examples}
\end{figure}

% !TEX root = ../../main.tex

\section{Optimization Objectives}\label{app:losses}

\subsection{InfoNCE}\label{app:info}
In this work, we use InfoNCE loss, $\infonce{}$~\citep{oord2018representation}.
Given a dual-encoder setup, we optimize a model in two directions: \acf{i2t} and \acf{t2i}.
The loss is defined as follows: 

\begin{subequations}
\begin{align*}
	\infonce{i2t} = \frac{1}{|\mathcal{B}|} \sum_{i \in \mathcal{B}}^{}\log \frac{
		\exp(\zimg{i}{}\zcapt{i}{}/\tau)
	}
	{\exp(\zimg{i}{}\zcapt{i}{}/\tau) + \sum_{j \neq i }^{} \exp(\zimg{i}{}\zcapt{j}{}/\tau)}, 
\end{align*}
\begin{align*}
	\infonce{i2t} = \frac{1}{|\mathcal{B}|} \sum_{i \in \mathcal{B}}^{}\log \frac{
		\exp(\zimg{i}{}\zcapt{i}{}/\tau)
	}
	{\exp(\zimg{i}{}\zcapt{i}{}/\tau) + \sum_{j \neq i }^{} \exp(\zimg{j}{}\zcapt{i}{}/\tau)}, 
\end{align*}
\begin{align*}
 	\infonce{} =  \frac{1}{2}  	\infonce{i2t} +  \frac{1}{2}  	\infonce{t2i}.
\end{align*}
\end{subequations}

\subsection{Latent Target Decoding}\label{app:ltd}

\Acf{LTD}~\citep{bleeker2023reducing} is an optimization objective that reduces predictive feature suppression for resource-constrained \ac{VL} methods.
\ac{LTD} consists of $\infonce{}$ and a reconstruction loss $ \mathcal{L}_{recon}$, which reconstructs the input caption from the latent representation $\zcapt{}{}$.

In \citep{bleeker2023reducing}, \ac{LTD} is implemented in two ways. 
Firstly, as a dual optimization objective:
\begin{align*}
	\ltd = \infonce{} + \beta \mathcal{L}_{recon}.
\end{align*}
Secondly, as an optimization constraint in combination with gradient descent by using the method of Lagrange multipliers:
\begin{align*}
	\max_{\lambda} \min_{}  \ltd=  \infonce{}  + \lambda \left( \frac{\mathcal{L}_{recon}}{\eta} - 1 \right).
\end{align*}

This optimization objective is minimized w.r.t. model parameter, while also being maximized w.r.t. $\lambda$. 
The value of $\lambda$ is automatically tuned by gradient ascent, such that the reconstruction bound $\eta$ is met.
In this work, we use both \ac{LTD} as a dual optimization objective and an optimization constraint. 
We select the loss with the highest evaluation scores on the validation set for evaluation.

\subsection{Implicit Feature Modification}\label{app:ifm}

\Acf{IFM}~\citep{robinson2021can} is a contrastive loss, with an additional perturbation budget $\epsilon$.
\ac{IFM} perturbs the logits value of the similarity scores between the images and captions, such that the model avoids using shortcut solutions for a correct similarity score.
\ac{IFM} subtracts $\epsilon/\tau$ from the positive logit values and adds $\epsilon/\tau$ to the negative logits values.

\begin{subequations}
\begin{align*}
	\mathcal{L}_{\text{IFM}}^{t2i} = \frac{1}{|\mathcal{B}|} \sum_{i \in \mathcal{B}}^{}\log \frac{
		\exp(
		(\zimg{i}{}\zcapt{i}{} - \epsilon)
		/\tau
		)
	}
	{\exp((\zimg{i}{}\zcapt{i}{}) - \epsilon)/\tau) + \sum_{j \neq i }^{} \exp((\zimg{i}{}\zcapt{j}{} + \epsilon)/\tau)},
\end{align*}
\begin{align*}
	\mathcal{L}_{\text{IFM}}^{i2t} = \frac{1}{|\mathcal{B}|} \sum_{i \in \mathcal{B}}^{}\log \frac{
		\exp(
		(\zimg{i}{}\zcapt{i}{} - \epsilon)
		/\tau
		)
	}
	{\exp((\zimg{i}{}\zcapt{i}{}) - \epsilon)/\tau) + \sum_{j \neq i }^{} \exp((\zimg{j}{}\zcapt{i}{} + \epsilon)/\tau)},
\end{align*}
\begin{align*}
	\mathcal{L}_{\text{IFM}}^{} =  \frac{1}{2}  	\mathcal{L}_{\text{IFM}}^{t2i}+  \frac{1}{2}  		\mathcal{L}_{\text{IFM}}^{i2t},
\end{align*}
\begin{align*}
\ifm =  \frac{1}{2}  	\mathcal{L}_{\text{IFM}}^{}  +  \frac{1}{2}  \infonce{}.
\end{align*}
\end{subequations}

Similar to ~\cite{robinson2021can}, we combine \ac{IFM} and the InfoNCE in a dual optimization objective.

\end{document}